\newif\iflongversion
\newcommand{\Topk}{Top-\texorpdfstring{$k$}{k}}
\newcommand{\Sm}{\Delta}
\newcommand{\Smk}{\Delta_k}
\newcommand{\SUmk}{\tilde{\Delta}_k}
\newcommand{\compatible}[1]{$#1$-compatible}
\newcommand{\MethodTopK}[1]{${\rm Top\mhyphen #1}$}
\newcommand{\MethodSvmOva}{${\rm SVM}^{\rm OVA}$}
\newcommand{\MethodTopPushOva}{${\rm TopPush}^{\rm OVA}$}
\newcommand{\MethodTopPushMulti}{${\rm TopPush}$}
\newcommand{\MethodSvmRecKOva}[1]{${\rm Recall@#1}^{\rm OVA}$}
\newcommand{\MethodSvmPrecKMulti}[1]{${\rm Prec@#1}$}
\newcommand{\MethodSvmRecKMulti}[1]{${\rm Recall@#1}$}
\newcommand{\MethodSvmPerf}{${\rm SVM}^{\rm Perf}$}
\newcommand{\MethodSvmTopK}[1]{${\rm top}\mhyphen {#1}~{\rm SVM_\alpha}$}
\newcommand{\MethodUsuTopK}[1]{${\rm top}\mhyphen {#1}~{\rm SVM_\beta}$}
\newcommand{\MethodWsabie}[2]{${\rm W{\scriptstyle ++}, {\scriptstyle #2/#1}}$}
\title{Top-k Multiclass SVM}
\author{
Maksim Lapin,$^1$
Matthias Hein$^2$
and
Bernt Schiele$^1$ \\
$^1$Max Planck Institute for Informatics, Saarbr\"ucken, Germany \\
$^2$Saarland University, Saarbr\"ucken, Germany
}
\begin{document}

\maketitle

\vspace*{-1em}
\begin{abstract}
Class ambiguity is typical in image
classification problems with a large number of classes.
When classes are difficult to discriminate,
it makes sense to allow $k$ guesses and evaluate classifiers
based on the top-$k$ error instead of the standard zero-one loss.
We propose top-$k$ multiclass SVM
as a direct method to optimize for top-$k$ performance.
Our generalization of the well-known multiclass SVM is based on
a tight convex upper bound of the top-$k$ error. 
We propose a fast optimization scheme
based on an efficient projection onto the
top-$k$ simplex, which is of its own interest.
Experiments on five datasets
show consistent improvements in top-$k$ accuracy
compared to various baselines.
\end{abstract}

\section{Introduction}
\label{sec:introduction}
\begin{wrapfigure}[10]{r}{0.5\textwidth}%
\vspace*{-3em}
\begin{minipage}[c]{.95\linewidth}%
\setlength\fboxsep{0pt}%
\setlength\fboxrule{0.5pt}
\fbox{\includegraphics[width=0.353\columnwidth]{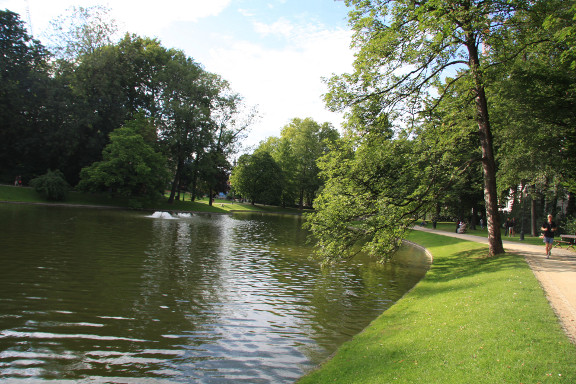}}%
\fbox{\includegraphics[width=0.280\columnwidth]{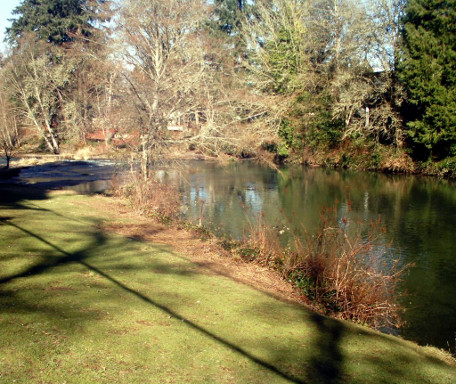}}%
\fbox{\includegraphics[width=0.353\columnwidth]{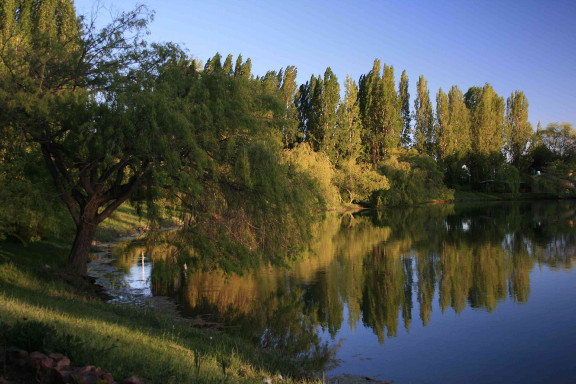}}\\
\fbox{\includegraphics[width=0.355\columnwidth]{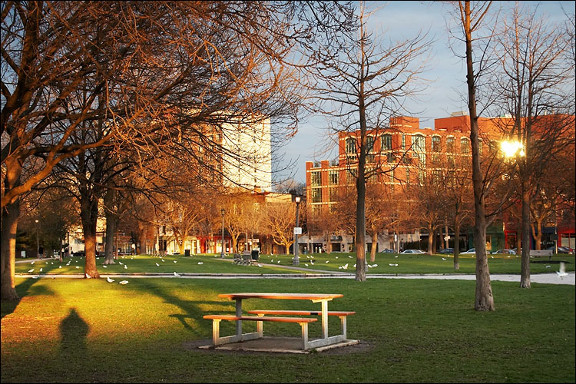}}%
\fbox{\includegraphics[width=0.315\columnwidth]{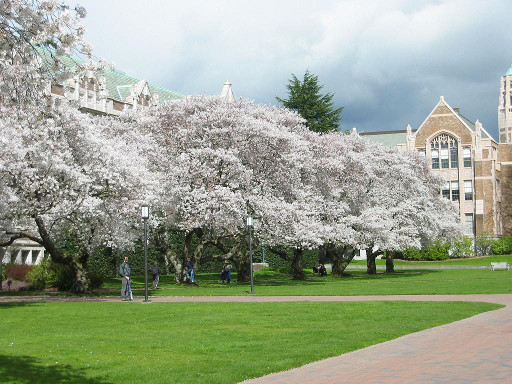}}%
\fbox{\includegraphics[width=0.315\columnwidth]{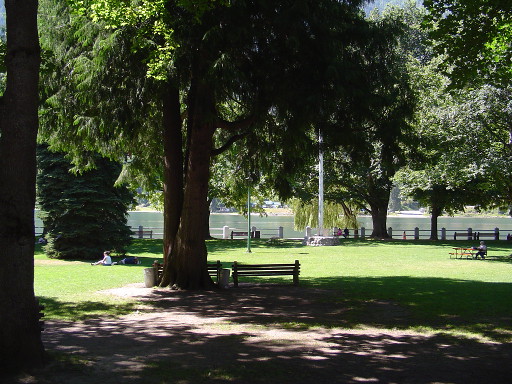}}
\end{minipage}
\caption{%
Images from SUN 397~\cite{xiao2010sun} illustrating class ambiguity.
{\bfseries Top:} (left to right) Park, River, Pond.
{\bfseries Bottom:} Park, Campus, Picnic area.
}\label{fig:teaser}%
\end{wrapfigure}

As the number of classes increases,
two important issues emerge:
class overlap and multi-label nature of examples \cite{Gupta2014}.
This phenomenon asks for adjustments of both
the evaluation metrics as well as the loss functions employed.
When a predictor is allowed $k$ guesses 
and is not penalized for $k-1$ mistakes,
such an evaluation measure is known as top-$k$ error.
We argue that this is an important metric that will inevitably receive
more attention in the future as the illustration in Figure~\ref{fig:teaser} 
indicates.

How obvious is it
that each row of Figure~\ref{fig:teaser} 
shows examples
of \emph{different} classes?
Can we imagine a human to predict correctly on the first attempt?
Does it even make sense to penalize a learning system
for such ``mistakes''? While the problem of
class ambiguity is apparent in computer vision, 
similar problems arise in other domains when the number of classes
becomes large.

We propose top-$k$ multiclass SVM
as a generalization of the well-known multiclass SVM \cite{crammer2001algorithmic}.
It is based on a tight convex upper bound of the top-$k$ zero-one loss
which we call \textbf{top-$k$ hinge loss}. While it turns out to be similar
to a top-$k$ version of the ranking based loss proposed by \cite{usunier2009ranking},
we show that the top-$k$ hinge loss is a lower bound on their version
and is thus a tighter bound on the top-$k$ zero-one loss.
We propose an efficient implementation based on
stochastic dual coordinate ascent (SDCA)
\cite{shalev2014accelerated}. 
A key ingredient in the optimization is the (biased) projection onto the top-$k$ simplex.
This projection turns out to be a tricky generalization of the continuous
quadratic knapsack problem, respectively the projection onto the standard simplex.
The proposed algorithm for solving it has complexity
$O(m \log m)$ for $x \in \Rb^m$.
Our implementation of the top-$k$ multiclass SVM scales to
large datasets like Places 205 with about $2.5$ million examples and $205$ classes
\cite{zhou2014learning}.
Finally, extensive experiments on several challenging computer vision
problems show that top-$k$ multiclass SVM consistently improves
in top-$k$ error over the multiclass SVM
(equivalent to our top-$1$ multiclass SVM),
one-vs-all SVM and other methods based on
different ranking losses \cite{joachims2005support,li2014top}.

\section{\Topk\ Loss in Multiclass Classification}
\label{sec:topkloss}
In multiclass classification, one is given a set
$
S = \{ (x_i, y_i) \given i = 1, \ldots, n \}
$
of $n$ training examples $x_i \in \Xc$
along with the corresponding labels $y_i \in \Yc$.
Let $\Xc = \Rb^d$ be the feature space and $\Yc = \{ 1, \ldots, m \}$
the set of labels. 
The task is to learn a set of $m$ linear predictors $w_y \in \Rb^d$
such that the risk of the classifier
$\argmax_{y \in \Yc} \inner{w_y, x}$
is minimized for a given loss function,
which is usually chosen to be a convex upper bound of the zero-one loss.
The generalization to nonlinear predictors using kernels is discussed below.

The classification problem becomes extremely challenging
in the presence of a large number of ambiguous classes.
It is natural in that case to extend the evaluation protocol
to allow $k$ guesses, which leads to the popular
top-$k$ error and top-$k$ accuracy performance measures.
Formally, we consider a ranking of labels induced by
the prediction scores $\inner{w_y, x}$.
Let the bracket $[\cdot]$ denote a permutation
of labels such that $[j]$ is the index of the $j$-th largest score, \ie
\begin{align*}
\inner{w_{[1]},x} \geq \inner{w_{[2]},x} \geq \ldots \geq \inner{w_{[m]},x}.
\end{align*}
The top-$k$ zero-one loss $\err_k$ is defined as
\begin{align*}%
\err_k(f(x), y) %
=\ind{\innern{w_{[k]},x} > \innern{w_{y},x}} ,
\end{align*}
where
$f(x) = \left( \inner{w_1, x}, \ldots, \inner{w_m, x} \right)^{\top}$
and
$\ind{P} = 1$ if $P$ is true and $0$ otherwise.
Note that the standard zero-one loss is recovered when $k=1$,
and $\err_k(f(x), y)$ is always $0$ for $k = m$.
Therefore, we are interested in the regime $1 \leq k < m$.

\subsection{Multiclass Support Vector Machine}
\label{sec:msvm}

In this section we review the multiclass SVM
of Crammer and Singer \cite{crammer2001algorithmic}
which will be extended to the top-$k$ multiclass SVM in the following.
We mainly follow the notation of \cite{shalev2014accelerated}.

Given a training pair $(x_i,y_i)$,
the multiclass SVM loss on example $x_i$ is defined as
\begin{align}\label{eq:msvm-loss}
\max_{y \in \Yc} \{
\ind{y \neq y_i} + \inner{w_y, x_i} - \inner{w_{y_i}, x_i} \} .
\end{align}

Since our optimization scheme is based on Fenchel duality,
we also require a convex conjugate of the primal loss function
(\ref{eq:msvm-loss}).
Let $c \bydef \ones - e_{y_i}$,
where $\ones$ is the all ones vector
and $e_j$ is the $j$-th standard basis vector in $\Rb^m$,
let $a \in \Rb^m$ be defined componentwise as
$a_j \bydef \inner{w_j, x_i} - \inner{w_{y_i}, x_i}$,
and let
\begin{align*}
\Sm \bydef \{ x \in \Rb^m \given \inner{\ones, x} \leq 1,\; 0 \leq x_i,
\; i = 1, \ldots, m \}.
\end{align*}

\begin{proposition}[\cite{shalev2014accelerated}, \S~5.1]
A primal-conjugate pair for the multiclass SVM loss (\ref{eq:msvm-loss}) is
\begin{align}\label{eq:msvm-pair}
\phi(a) &= \max \{0, (a + c)_{[1]} \}, &
\phi^*(b) &=
\begin{cases}
- \inner{c, b} & \text{if } b \in \Sm , \\
+\infty & \text{otherwise} .
\end{cases}
\end{align}
\end{proposition}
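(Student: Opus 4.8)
The plan is to compute the convex conjugate $\phi^*(b) = \sup_{a \in \Rb^m} \{ \inner{a,b} - \phi(a) \}$ directly from the definition, where $\phi(a) = \max\{0, (a+c)_{[1]}\} = \max\{0, \max_{j} (a_j + c_j)\}$. First I would rewrite $\phi$ as a maximum over a finite index set enlarged by a ``zero'' option: $\phi(a) = \max_{j \in \{0\} \cup \Yc} \psi_j(a)$ where $\psi_0 \equiv 0$ and $\psi_j(a) = a_j + c_j$ for $j \in \Yc$. A standard fact is that the conjugate of a pointwise maximum of affine functions is the indicator of (the closure of) the convex hull of the individual conjugates; equivalently, one can argue via Fenchel duality that $\phi(a) = \max_{\lambda \in \Sm'} \{ \inner{a, \lambda_{1:m}} + \inner{c, \lambda_{1:m}} \}$ where $\Sm'$ is the standard simplex in $\Rb^{m+1}$ (the extra coordinate absorbing the $0$ term), so that the weight vector on the $\Yc$-coordinates ranges exactly over $\Sm$ as defined in the excerpt.

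Concretely I would proceed as follows. Fix $b$ and consider $\sup_a \{\inner{a,b} - \max\{0, \max_j(a_j+c_j)\}\}$. Introduce the epigraph variable $t = \phi(a)$, giving the linear program $\sup_{a,t}\{\inner{a,b} - t : t \ge 0,\ a_j + c_j \le t \ \forall j\}$. Taking the Lagrange dual with multipliers $b_j \ge 0$ for the constraints $a_j + c_j \le t$ and $\mu \ge 0$ for $t \ge 0$, the stationarity conditions in $a$ force $b_j$ to be exactly the $j$-th coordinate of the dual vector, stationarity in $t$ forces $\inner{\ones,b} + \mu = 1$, hence $\inner{\ones,b} \le 1$; the dual objective collapses to $-\inner{c,b}$. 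When $b \notin \Sm$ — i.e. some $b_j < 0$ or $\inner{\ones,b} > 1$ — one picks $a$ along the offending direction (e.g. $a = -Re_j$ with $R \to \infty$ when $b_j < 0$, or $a = R\ones$ when $\inner{\ones,b} > 1$) to drive the supremum to $+\infty$. This yields exactly the claimed formula~\eqref{eq:msvm-pair}.

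The main obstacle is handling the $\max\{0, \cdot\}$ cleanly: the ``$0$'' inside the outer maximum is what produces the inequality $\inner{\ones,b} \le 1$ rather than an equality (projection onto $\Sm$ versus the standard simplex), and it must be tracked carefully through the dual so that the slack multiplier $\mu$ appears in the normalization constraint. A secondary point worth a sentence is that the inner $\max_j$ over the permutation bracket $(a+c)_{[1]}$ is just $\max_{j} (a_j + c_j)$, so no genuine combinatorics over permutations is needed here — that subtlety only surfaces later for the top-$k$ generalization. Verifying biconjugacy ($\phi^{**} = \phi$, so the pair is valid for Fenchel-dual optimization) is routine since $\phi$ is closed convex, and I would note it in one line rather than belabor it.
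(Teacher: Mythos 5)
Your proof is correct and follows essentially the same route the paper takes for the general top-$k$ case (Proposition~\ref{prop:topk-pairs}): an epigraph/LP reformulation of the pointwise maximum, Lagrangian duality to identify the constraint set $\Sm$ and the value $-\inner{c,b}$, and explicit unbounded directions when $b \notin \Sm$. The paper itself cites this $k=1$ statement to \cite{shalev2014accelerated} without giving a proof, but your argument is precisely the $k=1$ specialization of its proof of the top-$k$ conjugate, so nothing further is needed.
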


Note that thresholding with $0$ in $\phi(a)$ is actually redundant
as $(a + c)_{[1]} \geq (a + c)_{y_i} = 0$ and is only given
to enhance similarity to the top-$k$ version defined later.

\subsection{\Topk\ Support Vector Machine}
\label{sec:topk-svm}

The main motivation for the top-$k$ loss
is to relax the penalty for making an error
in the top-$k$ predictions.
Looking at $\phi$ in (\ref{eq:msvm-pair}),
a direct extension to the top-$k$ setting would be
a function
\begin{align*}
\psi_k(a) = \max \{0, (a + c)_{[k]} \} ,
\end{align*}
which incurs a loss iff $(a + c)_{[k]} > 0$.
Since the ground truth score $(a + c)_{[y_i]} = 0$,
we conclude that
\begin{align*}
\psi_k(a) > 0 \; \Longleftrightarrow \;
\inner{w_{[1]}, x_i} \geq \ldots
\geq \inner{w_{[k]}, x_i} > \inner{w_{y_i}, x_i} - 1 ,
\end{align*}
which directly corresponds to the top-$k$ zero-one loss $\err_k$
with margin $1$.
\iflongversion\else

\fi
Note that the function $\psi_k$ ignores the values
of the first $(k-1)$ scores, which could be quite large
if there are highly similar classes.
That would be fine in this model
as long as the correct prediction is within the first $k$ guesses.
However, the function $\psi_k$ is unfortunately nonconvex
since the function $f_k(x) = x_{[k]}$
returning the $k$-th largest coordinate
is nonconvex for $k \geq 2$.
Therefore, finding a globally optimal solution is computationally
intractable.

Instead, we propose the following convex upper bound on $\psi_k$,
which we call the \textbf{top-$k$ hinge loss},
\begin{align}\label{eq:topk-loss}
\phi_k(a) = \max \Big\{0, \frac{1}{k} \sum_{j=1}^k (a + c)_{[j]} \Big\} ,
\end{align}
where the sum of the $k$ largest components is known to be convex
\cite{boyd2004convex}. We have that
\begin{align*}
\psi_k(a) \leq \phi_k(a) \leq \phi_1(a) = \phi(a) ,
\end{align*}
for any $k \geq 1$ and $a \in \Rb^m$.
Moreover, $\phi_k(a) < \phi(a)$ unless all $k$ largest scores are the same.
This extra slack can be used to increase the margin between the current
and the $(m-k)$ remaining least similar classes,
which should then lead to an improvement in the top-$k$ metric.

\subsubsection{\Topk\ Simplex and Convex Conjugate of the \Topk\ Hinge Loss}
\label{sec:topk-svm-conjugate}

In this section we derive the conjugate of the proposed loss
(\ref{eq:topk-loss}).
We begin with a well known result that is used later in the proof.
All proofs can be found in the supplement.
Let $[a]_+ = \max\{0, a\}$.

\begin{lemma}[\cite{ogryczak2003minimizing}, Lemma~1]\label{lem:sumk}
$ \sum_{j=1}^k h_{[j]} = \min_t \big\{ kt + \sum_{j=1}^m [h_j - t]_+ \big\}.$
\end{lemma}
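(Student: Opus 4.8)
The plan is to show that the right-hand side, viewed as a function of $t$, is minimized at a threshold $t^\star$ equal to the $k$-th largest component $h_{[k]}$ (or, more carefully, at any value of $t$ in the interval $[h_{[k+1]}, h_{[k]}]$), and that plugging this value in yields exactly $\sum_{j=1}^k h_{[j]}$. Write $g(t) \bydef kt + \sum_{j=1}^m [h_j - t]_+$. First I would observe that $g$ is convex and piecewise linear in $t$, as a sum of $k t$ (linear) and $m$ convex piecewise-linear terms $[h_j - t]_+$; its breakpoints are precisely the values $h_1, \dots, h_m$. Hence a global minimum is attained at one of these breakpoints (or on a flat segment between two of them), and it suffices to examine the slope of $g$.

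Next I would compute the one-sided derivative. For $t$ not equal to any $h_j$, we have $g'(t) = k - \#\{ j : h_j > t \}$. As $t$ increases from $-\infty$ to $+\infty$, the count $\#\{ j : h_j > t \}$ decreases from $m$ to $0$, so $g'(t)$ increases from $k - m$ (negative, since $k < m$, or zero if $k=m$) to $k$ (positive). Therefore $g'$ changes sign exactly when $\#\{ j : h_j > t \}$ drops from being $\geq k$ to being $< k$, i.e.\ at $t = h_{[k]}$: for $t$ slightly below $h_{[k]}$ at least $k$ of the $h_j$ exceed $t$ so the slope is $\leq 0$, and for $t$ slightly above $h_{[k]}$ fewer than $k$ exceed $t$ so the slope is $> 0$. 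Thus $t^\star = h_{[k]}$ is a minimizer.

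Finally I would evaluate $g$ at $t^\star = h_{[k]}$. Only the terms with $h_j > h_{[k]}$ — and possibly some ties equal to $h_{[k]}$, which contribute $0$ — survive in the sum $\sum_j [h_j - h_{[k]}]_+$, and these are exactly (a choice of) the top $k-1$ components together with the contribution of $h_{[k]}$ itself being zero; more robustly, $\sum_{j=1}^m [h_j - h_{[k]}]_+ = \sum_{j=1}^{k} (h_{[j]} - h_{[k]})$ since each of the $k$ largest satisfies $h_{[j]} - h_{[k]} \geq 0$ and all others are $\leq 0$. Hence
\begin{align*}
g(h_{[k]}) = k\, h_{[k]} + \sum_{j=1}^{k} (h_{[j]} - h_{[k]}) = \sum_{j=1}^{k} h_{[j]},
\end{align*}
which establishes the claimed identity. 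The only mildly delicate point is the handling of ties at the value $h_{[k]}$: if several components equal $h_{[k]}$, the minimizer is an interval rather than a single point, but the value of $g$ is the same throughout, so the identity is unaffected. I expect this tie bookkeeping to be the one spot needing a careful sentence; everything else is a routine convexity-plus-slope argument.
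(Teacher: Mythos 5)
Your proof is correct, including the handling of ties at $h_{[k]}$, but it reaches the conclusion by a different route than the paper. The paper's argument is a direct two-sided inequality: it first evaluates the objective at any $t_0 \in [h_{[k+1]}, h_{[k]}]$ to get the upper bound $\min_t g(t) \leq \sum_{j=1}^k h_{[j]}$, and then, for the lower bound, uses the elementary chain $\sum_{j=1}^k h_{[j]} = kt + \sum_{j=1}^k (h_{[j]} - t) \leq kt + \sum_{j=1}^k [h_{[j]} - t]_+ \leq kt + \sum_{j=1}^m [h_j - t]_+$ valid for every $t$, with no appeal to convexity or derivatives. You share the evaluation step (specialized to $t^\star = h_{[k]}$) but replace the lower bound by a slope analysis of the convex piecewise-linear function $g$, computing $g'(t) = k - \#\{j : h_j > t\}$ and locating the sign change. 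Your version buys slightly more information — it identifies the entire minimizer set as the interval $[h_{[k+1]}, h_{[k]}]$ (degenerating to a point or extending to a half-line in the edge cases), which is occasionally useful — at the cost of the tie bookkeeping you flag and the need to justify that a minimum is attained (coercivity, i.e.\ $k \leq m$ so the slope at $-\infty$ is nonpositive and the slope at $+\infty$ is $k > 0$). The paper's inequality chain is shorter, requires no case analysis, and is immune to ties, which is presumably why it is the standard proof; either is acceptable here.
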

\iflongversion
\begin{proof}
For a $t_0 \in [h_{[k+1]}, h_{[k]}]$, we have
\begin{align*}
\min_t \big\{ kt + \sum_{j=1}^m [h_j - t]_+ \big\}
\leq kt_0 + \sum_{j=1}^m [h_j - t_0]_+ =
kt_0 + \sum_{j=1}^k \left(h_{[j]} - t_0 \right) =
\sum_{i=1}^k h_{[i]} .
\end{align*}
On the other hand, for any $t \in \Rb$, we get
\begin{align*}
\sum_{j=1}^k h_{[j]}
= kt + \sum_{j=1}^k \left( h_{[j]} - t \right)
\leq kt + \sum_{j=1}^k \left[ h_{[j]} - t \right]_+
\leq kt + \sum_{j=1}^m \left[ h_{j} - t \right]_+ .
\end{align*}
\end{proof}
\fi

\begin{wrapfigure}[15]{l}{0.33\textwidth}%
\includegraphics[width=.95\linewidth]{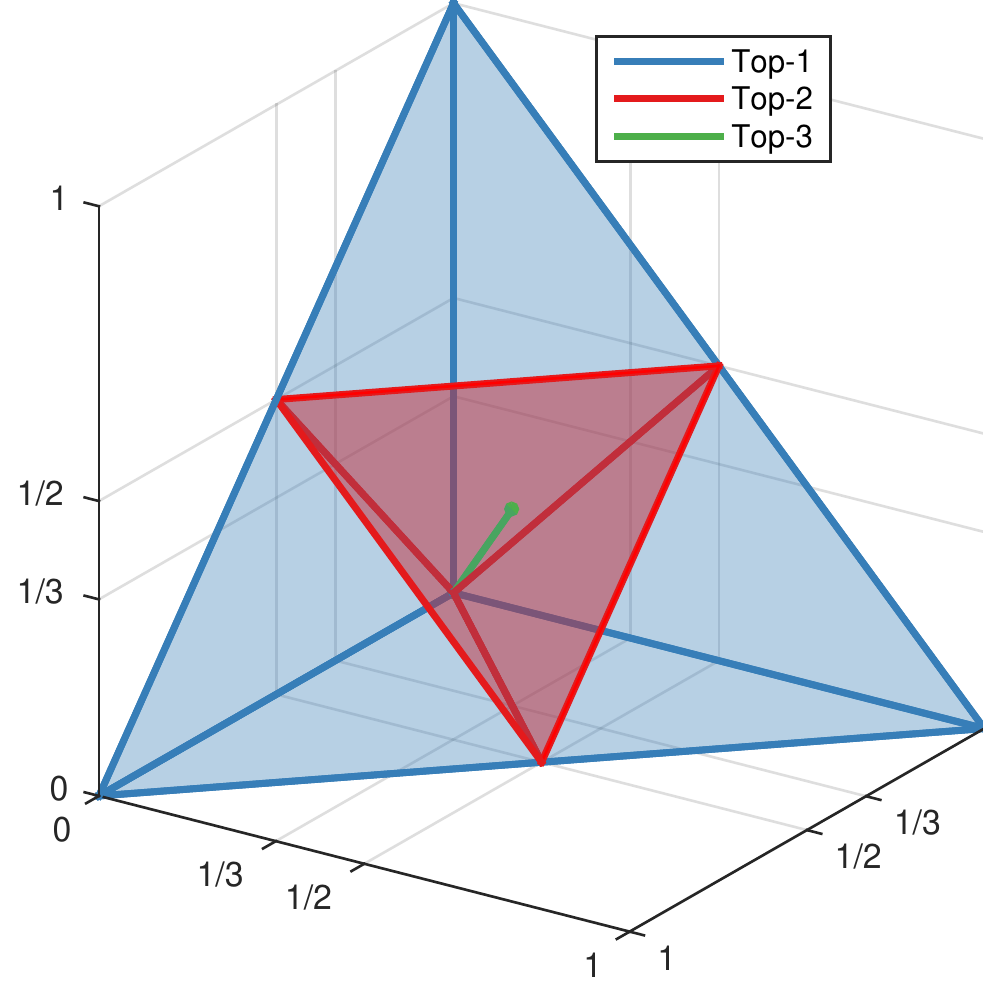}%
\caption{%
Top-$k$ simplex $\Smk(1)$ for $m=3$.
Unlike the standard simplex,
it has $\binom{m}{k} + 1$ vertices. %
}\label{fig:plot-simplex}%
\end{wrapfigure}

We also define a set $\Smk$ which arises naturally
as the effective domain\footnote{
A convex function $f: X \rightarrow \Rb \cup \{\pm \infty \}$
has an \emph{effective domain}
$\dom f = \{x \in X \given f(x) < +\infty \}$.}
of the conjugate of (\ref{eq:topk-loss}).
By analogy, we call it the top-$k$ simplex
as for $k=1$ it reduces to the standard simplex
with the inequality constraint (\ie $0 \in \Smk$).
Let $[m] \bydef 1, \ldots, m$.

\begin{definition}\label{def:topk-simplex}
The \emph{top-$k$ simplex} is a convex polytope defined as
\begin{align*}
\Smk(r) \bydef
\left\{ x \given \inner{\ones, x} \leq r, \;
0 \leq x_i \leq \frac{1}{k} \inner{\ones, x},
\; i \in [m] \right\} ,
\end{align*}
where $r \geq 0$ is the bound on the sum $\inner{\ones, x}$.
We let $\Smk \bydef \Smk(1)$.
\end{definition}
The crucial difference to the standard simplex is the upper bound on $x_i$'s,
which limits their maximal contribution to the total sum $\inner{\ones, x}$.
See Figure~\ref{fig:plot-simplex} for an illustration.

The first technical contribution of this work is as follows.
\begin{proposition}\label{prop:topk-pairs}
A primal-conjugate pair for the top-$k$ hinge loss (\ref{eq:topk-loss}) is
given as follows:
\begin{align}\label{eq:topk-pair}
\phi_k(a) &=
\max \Big\{0, \frac{1}{k} \sum_{j=1}^k (a + c)_{[j]} \Big\} , &
\phi_k^*(b) &=
\begin{cases}
- \inner{c, b} & \text{if } b \in \Smk , \\
+\infty & \text{otherwise} .
\end{cases}
\end{align}
Moreover,
$\phi_k(a) = \max \{ \inner{a + c, \lambda} \given \lambda \in \Smk \}$.
\end{proposition}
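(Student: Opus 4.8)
The plan is to prove the ``Moreover'' identity first and then read off the conjugate as an essentially routine support-function computation. Write $h = a + c$ and recall the linear-programming representation of the sum of the $k$ largest coordinates \cite{boyd2004convex,ogryczak2003minimizing}: for any integer $1 \le k \le m$,
\begin{align*}
\sum_{j=1}^k h_{[j]} = \max \big\{ \inner{h, y} \given \inner{\ones, y} = k,\; 0 \le y_i \le 1,\; i \in [m] \big\},
\end{align*}
the optimum being attained at the $0/1$ vector selecting the $k$ largest entries of $h$. I would decompose $\Smk$ by the value $s = \inner{\ones, \lambda} \in [0,1]$ of the coordinate sum: for a fixed $s$ the slice $\{ \lambda \given \inner{\ones,\lambda} = s,\ 0 \le \lambda_i \le s/k \}$ is the $(s/k)$-rescaling of the polytope above, so $\max_\lambda \inner{h,\lambda}$ over that slice equals $\frac{s}{k}\sum_{j=1}^k h_{[j]}$. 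Maximizing this linear function of $s$ over $[0,1]$ picks $s = 1$ when $\sum_{j=1}^k h_{[j]} \ge 0$ and $s = 0$ otherwise, giving
\begin{align*}
\max_{\lambda \in \Smk} \inner{a+c, \lambda} = \max\big\{ 0, \tfrac{1}{k} \sum_{j=1}^k (a+c)_{[j]} \big\} = \phi_k(a)
\end{align*}
(the maximum is attained since $\Smk$ is compact), which is the claimed variational formula.

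With this in hand, $\phi_k(a) = \sigma_{\Smk}(a+c)$ where $\sigma_{\Smk}(z) = \max_{\lambda \in \Smk} \inner{z, \lambda}$ is the support function of the nonempty compact convex polytope $\Smk$. I would then compute the conjugate straight from the definition, substituting $z = a + c$:
\begin{align*}
\phi_k^*(b) = \sup_{a} \big\{ \inner{b, a} - \sigma_{\Smk}(a + c) \big\} = - \inner{c, b} + \sup_{z} \big\{ \inner{b, z} - \sigma_{\Smk}(z) \big\} = - \inner{c, b} + \sigma_{\Smk}^*(b).
\end{align*}
Since $\Smk$ is closed and convex, its indicator $\delta_{\Smk}$ is closed, proper and convex with $\delta_{\Smk}^* = \sigma_{\Smk}$, so biconjugation gives $\sigma_{\Smk}^* = \delta_{\Smk}^{**} = \delta_{\Smk}$, i.e.\ $\sigma_{\Smk}^*(b) = 0$ for $b \in \Smk$ and $+\infty$ otherwise. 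Substituting back yields $\phi_k^*(b) = -\inner{c,b}$ on $\Smk$ and $+\infty$ off it, which in particular confirms that $\Smk$ is the effective domain of $\phi_k^*$; and $\phi_k^{**} = \phi_k$ since $\phi_k$ is finite, convex and continuous, so this is a genuine primal-conjugate pair.

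The one genuinely non-routine step is the first: seeing that the ``$0$'' inside $\phi_k$ forces the \emph{inequality} $\inner{\ones, x} \le 1$ (not an equality) in $\Smk$, while the per-coordinate cap $x_i \le \frac{1}{k} \inner{\ones, x}$ is exactly what the top-$k$ LP imposes after rescaling; the slicing-over-$s$ argument is the clean way to see both at once. An alternative that avoids having to guess $\Smk$ is to expand $\phi_k$ through Lemma~\ref{lem:sumk}, write $[u]_+ = \max_{0 \le \mu \le 1} \mu u$ and $[h_j - t]_+ = \max_{0 \le \nu_j \le 1} \nu_j (h_j - t)$, then exchange the inner maxima with $\sup_a$ via a minimax theorem and eliminate $a$ (whose coefficient must vanish), which reproduces precisely the constraints defining $\Smk$; I expect the support-function route to be the shortest to write out rigorously.
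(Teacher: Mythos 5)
Your proof is correct, and it pivots on the same key intermediate fact as the paper's proof, namely the variational formula $\phi_k(a) = \max \{ \inner{a + c, \lambda} \given \lambda \in \Smk \}$, but it gets to that formula and away from it by genuinely different means. The paper invokes Lemma~\ref{lem:sumk} to rewrite $\phi_k$ as an explicit linear program in auxiliary variables $(s,t,\xi)$, forms the Lagrangian, and reads the constraints $\inner{\ones,\lambda}\leq 1$, $0 \leq \lambda_j \leq \frac{1}{k}\inner{\ones,\lambda}$ off the multiplier relations, so that $\Smk$ is \emph{discovered} as the dual feasible set; you instead start from the known polytope representation of $\sum_{j=1}^k h_{[j]}$ as a support function and \emph{verify} that $\Smk$ works by slicing it along $\inner{\ones,\lambda}=s$ and optimizing the resulting linear function of $s$ over $[0,1]$, which is shorter and nicely explains why the threshold at $0$ corresponds to the inequality $\inner{\ones,x}\leq 1$, but presupposes knowing the answer. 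For the conjugate, the paper substitutes the variational formula into $\sup_a\{\inner{a,b}-\phi_k(a)\}$ and exchanges $\max_a$ with $\min_{\lambda\in\Smk}$ (a minimax swap it does not justify, though it is harmless here since $\Smk$ is compact and the objective is bilinear, and the inner maximization forces $b=\lambda$ in any case); your route via $\phi_k = \sigma_{\Smk}(\cdot + c)$ and $\sigma_{\Smk}^* = \delta_{\Smk}^{**} = \delta_{\Smk}$ is the cleaner, fully rigorous version of the same computation. The alternative you sketch at the end, expanding $\phi_k$ through Lemma~\ref{lem:sumk} and eliminating $a$ against the multipliers, is essentially the paper's argument.
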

\iflongversion
\begin{proof}
We use Lemma~\ref{lem:sumk} to write
\begin{align*}
\phi_k(a) = \min \big\{ s \, \given \,
s \geq t + \frac{1}{k} \sum_{j=1}^m \xi_j , \;
s \geq 0 , \;
\xi_j \geq a_j + c_j - t , \;
\xi_j \geq 0 \big\} .
\end{align*}
The Lagrangian is given as
\begin{align*}
\Lc(s,t,\xi,\alpha,\beta,\lambda,\mu) =
s + \alpha \big( t + \frac{1}{k} \sum_{j=1}^m \xi_j - s \big)
- \beta s 
+ \sum_{j=1}^m \lambda_j \left( a_j + c_j - t - \xi_j \right)
- \sum_{j=1}^m \mu_j \xi_j .
\end{align*}
Minimizing over $(s, t, \xi)$, we get
$\alpha + \beta = 1$,
$\alpha = \sum_{j=1}^m \lambda_j$,
$\lambda_j + \mu_j = \frac{1}{k} \alpha$.
As $\beta \geq 0$ and $\mu_j \geq 0$, it follows that
$\inner{\ones, \lambda} \leq 1$ and
$0 \leq \lambda_j \leq \frac{1}{k} \inner{\ones, \lambda}$.
Since the duality gap is zero, we get
\begin{align*}
\phi_k(a) = \max \{ \inner{a + c, \lambda} \given \lambda \in \Smk \} .
\end{align*}
The conjugate $\phi_k^*(b)$ can now be computed as
\begin{align*}
\max_a \{ \inner{a,b} - \phi_k(a) \} =
\max_a \min_{\lambda \in \Smk} \{ \inner{a,b} - \inner{a + c, \lambda} \} =
\min_{\lambda \in \Smk} \{ - \inner{c, \lambda}
+ \max_a \inner{a,b - \lambda} \} .
\end{align*}
Since $\max_a \inner{a,b - \lambda} = \infty$ unless
$b = \lambda$, we get the formula for $\phi_k^*(b)$ as in
(\ref{eq:topk-pair}).

\end{proof}
\fi

Therefore, we see that the proposed formulation (\ref{eq:topk-loss})
naturally extends the multiclass SVM of Crammer and Singer
\cite{crammer2001algorithmic}, which is recovered when $k=1$.
We have also obtained an interesting extension
(or rather contraction, since $\Smk \subset \Sm$)
of the standard simplex.

\subsection{Relation of the \Topk\ Hinge Loss to Ranking Based Losses}
\label{sec:comparison}

Usunier \etal \cite{usunier2009ranking}
have recently formulated a very general family of convex losses
for ranking and multiclass classification.
In their framework, the hinge loss on example $x_i$
can be written as
\[
L_{\beta}( a )=\sum_{y=1}^m \beta_y \max\{0,(a+c)_{[y]}\},
\]
where $\beta_1 \geq \ldots \geq \beta_m \geq 0$
is a non-increasing sequence of non-negative numbers
which act as weights for the ordered losses.
\iflongversion\else

\fi
The relation to the top-$k$ hinge loss becomes apparent if we choose
$\beta_j = \frac{1}{k}$ if $j \leq k$, and $0$ otherwise.
In that case, we obtain another version of the top-$k$ hinge loss
\begin{align}\label{eq:topk-usu}
\tilde{\phi}_k\big( a \big)=
\frac{1}{k}\sum_{j=1}^k \max\{0, (a+c)_{[j]}\}.
\end{align}
It is straightforward to check that
\begin{align*}
\psi_k(a) \leq \phi_k(a) \leq \tilde{\phi}_k(a) \leq \phi_1(a)
= \tilde{\phi}_1(a) = \phi(a).
\end{align*}
The bound $\phi_k(a) \leq \tilde{\phi}_k(a)$ holds with equality if $(a+c)_{[1]}\leq 0$ or $(a+c)_{[k]}\geq 0$.
Otherwise, there is a gap and our top-$k$ loss
is a strictly better upper bound on the actual top-$k$ zero-one loss.
\iflongversion\else
We perform extensive evaluation and comparison of both versions of
the top-$k$ hinge loss in \S~\ref{sec:experiments}.

\fi
While \cite{usunier2009ranking} employed LaRank~\cite{bordes2007solving}
and \cite{Gupta2014}, \cite{Weston2011} optimized an approximation
of $L_{\beta}( a )$, %
we show in
\iflongversion
\S~\ref{sec:usunier}
\else
the supplement
\fi
how the loss function (\ref{eq:topk-usu}) can be optimized exactly and efficiently
within the Prox-SDCA framework.

\textbf{Multiclass to binary reduction.}
It is also possible to compare directly to ranking based methods
that solve a binary problem using the following reduction.
We employ it in our experiments to evaluate
the ranking based methods \MethodSvmPerf\ \cite{joachims2005support}
and
\MethodTopPushMulti\ \cite{li2014top}.
The trick is to augment the training set by
embedding each $x_i \in \Rb^d$ into $\Rb^{md}$ using
a feature map $\Phi_y$ for each $y \in \Yc$.
The mapping $\Phi_y$ places $x_i$ at the $y$-th position
in $\Rb^{md}$ and puts zeros everywhere else.
The example $\Phi_{y_i}(x_i)$ is labeled $+1$ and all
$\Phi_{y}(x_i)$ for $y \neq y_i$ are labeled $-1$.
Therefore, we have a new training set with $mn$ examples
and $md$ dimensional (sparse) features.
Moreover, %
$\inner{w, \Phi_y(x_i)} = \inner{w_y,x_i}$
which establishes the relation to the original multiclass problem.

Another approach to general performance measures is given in \cite{joachims2005support}.
It turns out that using the above reduction,
one can show that under certain constraints on the classifier, the recall@$k$
is equivalent to the top-$k$ error.
A convex upper bound on recall@$k$ is then optimized in \cite{joachims2005support}
via structured SVM.
As their convex upper bound on the recall@$k$
is not decomposable in an instance based loss,
it is not directly comparable to our loss.
While being theoretically very elegant,
the approach of \cite{joachims2005support} does not scale to very large datasets.

\section{Optimization Framework}
\label{sec:optimization}
We begin with a general $\ell_2$-regularized
multiclass classification problem,
where for notational convenience we keep the loss function unspecified.
The multiclass SVM or the top-$k$ multiclass SVM %
are obtained by plugging in the corresponding loss function from
\S~\ref{sec:topkloss}.

\subsection{Fenchel Duality for \texorpdfstring{$\ell_2$}{L2}-Regularized
Multiclass Classification Problems}
\label{sec:duality}

Let $X \in \Rb^{d \times n}$ be the matrix of training examples $x_i \in \Rb^d$,
 let $W \in \Rb^{d \times m}$ be the matrix of primal variables
obtained by stacking the %
vectors $w_y \in \Rb^d$, and
$A \in \Rb^{m \times n}$ the matrix of dual variables.

Before we prove our main result of this section
(Theorem~\ref{thm:primal-dual}), we first impose a technical constraint
on a loss function to be compatible with the choice of the
ground truth coordinate.
The top-$k$ hinge loss from Section \ref{sec:topkloss} satisfies
this requirement as we show in Proposition~\ref{prop:topk-compatible}.
We also prove an auxiliary Lemma~\ref{lem:conjugate},
which is then used in Theorem~\ref{thm:primal-dual}.

\begin{definition}\label{def:compatible}
A convex function $\phi$ is \emph{\compatible{j}} if for any
$y \in \Rb^m$ with $y_j = 0$ we have that
\begin{align*}
\sup \{ \inner{y, x} - \phi(x) \given x_j = 0 \} = \phi^*(y) .
\end{align*}
\end{definition}
This constraint is needed to prove equality in the following Lemma.
\begin{lemma}\label{lem:conjugate}
Let $\phi$ be \compatible{j},
let $H_j = \Id - \ones \tra{e}_j$, and let $\Phi(x) = \phi(H_j x)$, then
\begin{align*}
\Phi^*(y) = \begin{cases}
\phi^*(y - y_j e_j) & \text{if } \inner{\ones, y} = 0, \\
+\infty & \text{otherwise}.
\end{cases}
\end{align*}
\end{lemma}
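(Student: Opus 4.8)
The plan is to compute $\Phi^*(y) = \sup_x\{\langle y,x\rangle - \phi(H_j x)\}$ directly by analyzing the linear map $H_j = \Id - \ones e_j^\top$. First I would understand the action of $H_j$: it sends $x$ to the vector $x - x_j \ones$, i.e. it subtracts the $j$-th coordinate from every entry, so in particular $(H_j x)_j = 0$ always. Conversely, I would observe that the range of $H_j$ is exactly the hyperplane $\{z : z_j = 0\}$ (given any $z$ with $z_j = 0$, we have $H_j z = z$ since the $j$-th coordinate of $z$ is zero), and that $H_j$ restricted to an appropriate complement is onto that hyperplane. So the substitution $z = H_j x$ ranges over $\{z : z_j = 0\}$ as $x$ ranges over $\Rb^m$, but the fibers matter: $H_j x = H_j x'$ iff $x - x'$ is a multiple of... actually one checks $H_j(\ones) = \ones - m\,e_j$... let me instead keep $x$ as the variable and split it.

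The cleaner route: parametrize $x$ and compute $\langle y, x\rangle - \phi(H_j x)$. Write $z = H_j x = x - x_j\ones$, so $x = z + x_j \ones$ with $z_j = 0$ and $x_j \in \Rb$ free (an independent parameter). Then $\langle y, x\rangle = \langle y, z\rangle + x_j \langle \ones, y\rangle$. Taking the sup over $x_j \in \Rb$ first, the term $x_j \langle\ones, y\rangle$ is unbounded above unless $\langle \ones, y\rangle = 0$, which immediately gives the $+\infty$ case. When $\langle\ones,y\rangle = 0$, the $x_j$ term vanishes and we are left with $\sup\{\langle y, z\rangle - \phi(z) : z_j = 0\}$. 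Now here is where \emph{$j$-compatibility} enters: since $\langle\ones,y\rangle=0$ we have $y_j = -\sum_{i\neq j} y_i$, which is not necessarily zero, so I cannot apply the definition to $y$ directly. Instead I would replace $z$ by the same $z$ but note $\langle y, z\rangle = \langle y - y_j e_j, z\rangle$ because $z_j = 0$. The vector $y - y_j e_j$ has its $j$-th coordinate equal to zero, so by Definition~\ref{def:compatible} applied to $\phi$ with the vector $y - y_j e_j$, we get $\sup\{\langle y - y_j e_j, z\rangle - \phi(z) : z_j = 0\} = \phi^*(y - y_j e_j)$. Combining the two cases yields the claimed formula.

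The main obstacle I anticipate is the bookkeeping around the change of variables $x \mapsto (z, x_j)$ and making sure it is a genuine bijection $\Rb^m \to \{z_j = 0\}\times\Rb$ so that the sup decomposes correctly — one must verify $x_j$ truly is a free parameter independent of $z = H_j x$ and that $H_j$ is surjective onto the hyperplane, rather than, say, having a one-dimensional kernel that collapses information. (Indeed $H_j$ has nontrivial kernel — $e_j \in \ker H_j$ since $H_j e_j = e_j - \ones$... wait, $= e_j - \ones \neq 0$; actually $\ker H_j = \{x : x = x_j\ones\}$, the multiples of $\ones$ with... no: $H_j x = 0 \iff x = x_j\ones \iff$ all coordinates equal, and then $x_j$ is that common value, consistent — so $\ker H_j$ is spanned by $\ones$.) So the decomposition $x = z + x_j\ones$ with $z_j=0$ is exactly the splitting along $\ker H_j$ and a complement, which is why it works; I would state this carefully. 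The rest is the routine conjugate manipulation plus one clean invocation of $j$-compatibility, which is precisely the hypothesis engineered to make the restricted sup equal the full conjugate.
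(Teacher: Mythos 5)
Your proof is correct, and it follows the same overall strategy as the paper's (split $x$ along $\Ker H_j = \mathrm{span}\{\ones\}$, observe that finiteness of the sup forces $\inner{\ones,y}=0$, reduce to a supremum over $\{z \given z_j = 0\}$, and invoke $j$-compatibility to upgrade that restricted supremum to $\phi^*$), but your choice of complement is genuinely simpler. The paper decomposes $x$ orthogonally into $\Ker H_j$ and $\{x \given \inner{\ones,x}=0\}$, which forces it to pass through the Moore--Penrose pseudoinverse and a rank-one update formula to identify $\tra{(\pinv{H}_j)}y = y - y_j e_j$. You instead use the oblique but exact parametrization $x = z + x_j\ones$ with $z = H_j x \in \Img H_j = \{z \given z_j = 0\}$ and $x_j$ free, which you correctly verify is a bijection $\Rb^m \to \{z_j=0\}\times\Rb$; the shift $y \mapsto y - y_j e_j$ then falls out of the trivial identity $\inner{y,z} = \inner{y - y_j e_j, z}$ on $\{z_j = 0\}$, with no pseudoinverse needed. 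The only blemishes are thinking-out-loud slips (e.g.\ the aside $H_j\ones = \ones - m e_j$ is actually $\tra{H}_j\ones$; in fact $H_j\ones = 0$), which you self-correct, and the implicit assumption that $\phi$ is finite somewhere on $\{z_j=0\}$ when deducing the $+\infty$ case --- an assumption the paper makes implicitly as well and which holds for the real-valued losses in question.
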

\iflongversion
\begin{proof}
We have that
$\Ker H_j = \{ x \given H_j x = 0 \} = \{ t\ones \given t \in \Rb \}$
and
$\ort{\Ker} H_j = \{ x \given \inner{\ones, x} = 0 \}$.
\begin{align*}
\Phi^*(y) &= \sup \{ \inner{y,x} - \Phi(x) \given x \in \Rb^m \} \\
&= \sup\{ \innern{y,\para{x}} + \innern{y,\ort{x}} - \phi(H_j \ort{x})
\given x = \para{x} + \ort{x},
\para{x} \in \Ker H_j, \ort{x} \in \ort{\Ker} H_j \}.
\end{align*}
It follows that $\Phi^*(y)$ can only be finite if
$\innern{y,\para{x}} = 0$, which implies
$y \in \ort{\Ker} H_j$.
Let $\pinv{H}_j$ be the Moore-Penrose pseudoinverse of $H_j$.
For a $y \in \ort{\Ker} H_j$, we can write
\begin{equation}\label{eq:conjugate-ineq}
\begin{aligned}
\Phi^*(y) &=
\sup\{ \innern{y, \pinv{H}_j H_j \ort{x}} - \phi(H_j \ort{x})
\given \ort{x} \in \ort{\Ker} H_j \} \\
&=
\sup\{ \innern{\tra{(\pinv{H}_j)} y, z} - \phi(z)
\given z \in \Img H_j \} \\
&\leq
\sup\{ \innern{\tra{(\pinv{H}_j)} y, z} - \phi(z)
\given z \in \Rb^m \} = \phi^*(\tra{(\pinv{H}_j)} y) ,
\end{aligned}
\end{equation}
where $\Img H_j = \{ H_j x \given x \in \Rb^m \}$.
Using rank-$1$ update of the Moore-Penrose pseudoinverse
(\cite{petersen2008matrix}, \S~3.2.7), we can compute
$\tra{(\pinv{H}_j)} = \Id - e_j \tra{e}_j
- \frac{1}{m}(\ones - e_j) \tra{\ones}$.
Since $y \in \ort{\Ker} H_j$, the last term is zero and we have
$\tra{(\pinv{H}_j)} y = y - y_j e_j$.
Finally, we use the fact that $\phi$ is \compatible{j} to prove
that the inequality in (\ref{eq:conjugate-ineq}) is satisfied
with equality.
We have that
$\Img H_j = \{ z \given z_j = 0 \}$
and $(y - y_j e_j)_j = 0$.
Therefore, when $\inner{\ones, y} = 0$,
$\Phi^*(y)
= \sup\{ \innern{y - y_j e_j, z} - \phi(z) \given z_j = 0 \}
= \phi^*(y - y_j e_j)$.
\end{proof}
\fi

We can now use Lemma~\ref{lem:conjugate} to compute convex conjugates
of the loss functions.

\begin{theorem}\label{thm:primal-dual}
Let $\phi_i$ be \compatible{y_i} for each $i \in [n]$,
let $\lambda > 0$ be a regularization parameter, and
let $K = \tra{X}\!X$ be the Gram matrix.
The primal and Fenchel dual objective functions are given as:
\begin{equation*}%
\begin{aligned}
P(W) &= +\frac{1}{n} \sum_{i=1}^n
\phi_i \left( \tra{W}x_i - \inner{w_{y_i},x_i} \ones \right)
+ \frac{\lambda}{2} \tr\left(\tra{W} W \right) , \\
D(A) &= -\frac{1}{n} \sum_{i=1}^n
\phi_i^* \left( - \lambda n ( a_i - a_{y_i,i} e_{y_i}) \right)
- \frac{\lambda}{2} \tr\left( A K \tra{A} \right),
\text{ if } \inner{\ones, a_i} = 0 \; \forall i, \; +\infty
\text{ otherwise.}
\end{aligned}
\end{equation*}
Moreover, we have that
$W = X\tra{A}$ and $\tra{W}x_i = A K_i$,
where $K_i$ is the $i$-th column of $K$.
\end{theorem}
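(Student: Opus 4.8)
The plan is to derive $D(A)$ from $P(W)$ by Fenchel duality applied coordinate-wise to each training example, and then use Lemma~\ref{lem:conjugate} to convert the conjugate of the composed loss $w \mapsto \phi_i(\tra{W}x_i - \inner{w_{y_i},x_i}\ones)$ into the stated conjugate of $\phi_i$ itself. First I would rewrite the primal loss term: observe that $\tra{W}x_i - \inner{w_{y_i},x_i}\ones = H_{y_i}(\tra{W}x_i)$ with $H_{y_i} = \Id - \ones\tra{e}_{y_i}$, so that the $i$-th loss is $\Phi_i(\tra{W}x_i)$ where $\Phi_i = \phi_i \circ H_{y_i}$ in the notation of Lemma~\ref{lem:conjugate}. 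This is the point where the \compatible{y_i} hypothesis enters: it is exactly what is needed for the lemma to give $\Phi_i^*(v) = \phi_i^*(v - v_{y_i}e_{y_i})$ when $\inner{\ones,v}=0$ and $+\infty$ otherwise.

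Next I would set up the standard Fenchel dual for an $\ell_2$-regularized linear problem. Writing the primal as $P(W) = \tfrac1n\sum_i \Phi_i(\tra{W}x_i) + \tfrac{\lambda}{2}\tr(\tra{W}W)$, I introduce dual variables $a_i \in \Rb^m$ conjugate to the linear images $\tra{W}x_i$ (equivalently, stack them into $A \in \Rb^{m\times n}$). The regularizer $\tfrac{\lambda}{2}\|W\|_F^2$ is self-conjugate up to scaling, and the coupling is linear through $X$, so the classical computation (as in \cite{shalev2014accelerated}) yields the optimality relation $W = \tfrac{1}{\lambda n}\sum_i x_i \tra{(\lambda n a_i)} = X\tra{A}$ after absorbing the $\lambda n$ factor into the definition of $A$, and correspondingly the dual
\begin{align*}
D(A) = -\frac1n\sum_{i=1}^n \Phi_i^*\!\left(-\lambda n\, a_i\right) - \frac{\lambda}{2}\tr\!\left(AK\tra{A}\right),
\end{align*}
where $K = \tra{X}X$ and $\tr(AK\tra{A}) = \|X\tra{A}\|_F^2 = \|W\|_F^2$. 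I would verify $\tra{W}x_i = X\tra{A}\,{}^{\!\top}$... more carefully: $\tra{W}x_i = (X\tra{A})^{\top}x_i = A\tra{X}x_i = AK_i$, giving the last claim.

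Finally I would substitute the formula from Lemma~\ref{lem:conjugate} into $\Phi_i^*(-\lambda n\, a_i)$: this is finite only if $\inner{\ones, -\lambda n\, a_i} = 0$, i.e.\ $\inner{\ones, a_i} = 0$, which produces the domain restriction stated in $D(A)$; and on that domain it equals $\phi_i^*\big(-\lambda n\, a_i - (-\lambda n\, a_{y_i,i})e_{y_i}\big) = \phi_i^*\big(-\lambda n(a_i - a_{y_i,i}e_{y_i})\big)$, matching the theorem. The main obstacle I anticipate is bookkeeping rather than conceptual: getting the scaling factors ($\lambda$ versus $\lambda n$, and the sign in $-\lambda n(a_i - a_{y_i,i}e_{y_i})$) consistent between the $W = X\tra{A}$ convention and the conjugate's argument, and making sure the application of Lemma~\ref{lem:conjugate} is legitimate — in particular checking that each $\phi_i$ is genuinely \compatible{y_i}, which is deferred to Proposition~\ref{prop:topk-compatible}. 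A secondary point worth a remark is justifying strong duality (zero duality gap), which follows from convexity of all $\phi_i$ together with the finiteness/coercivity of the $\ell_2$ regularizer, so Slater-type conditions are not an issue.
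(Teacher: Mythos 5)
Your proposal is correct and follows essentially the same route as the paper: rewrite the loss as $\Phi_i = \phi_i \circ H_{y_i}$, apply standard Fenchel duality for the $\ell_2$-regularized linear model, invoke Lemma~\ref{lem:conjugate} to obtain the constraint $\inner{\ones, a_i}=0$ and the shifted argument $-\lambda n(a_i - a_{y_i,i}e_{y_i})$, and rescale $A$ to absorb the $\lambda$ factor. The scaling bookkeeping you flag is exactly how the paper handles it (via the redefinition $A \leftarrow \frac{1}{\lambda}A$ and the first-order optimality condition for $W = X\tra{A}$).
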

\iflongversion
\begin{proof}
We use Fenchel duality (see \eg \cite{borwein2000convex}, Theorem~3.3.5),
to write
$P(W) = g(\tra{X} W) + f(W)$, and
$D(A) = -g^*(-\tra{A}) - f^*(X\tra{A})$, for the functions
$g$ and $f$ defined as follows:
\begin{align*}
g(\tra{X} W) &= \frac{1}{n} \sum_{i=1}^n \Phi_i
\left(\tra{W} x_i \right) = \frac{1}{n} \sum_{i=1}^n \phi_i
\left( H_{y_i} \tra{W} x_i \right), &
f(W) &= \frac{\lambda}{2} \tr\left(\tra{W} W \right)
= \frac{\lambda}{2} \norms{W}_F,
\end{align*}
where $H_{y_i} = \Id - \ones \tra{e}_{y_i}$.
One can easily verify that
$g^*(-\tra{A}) = \frac{1}{n} \sum_{i=1}^n \Phi_i^*(-n a_i)$
and 
$f^*(X\tra{A}) = \frac{\lambda}{2} \norms{\frac{1}{\lambda} X\tra{A}}_F$.
From Lemma~\ref{lem:conjugate}, we have that
$
\Phi_i^*(-n a_i) = \phi^*(-n (a_i - a_{y_i,i} e_{y_i})) ,
$
if
$\inner{\ones, -n a_i} = 0$,
and $+\infty$ otherwise.
To complete the proof, we redefine
$A \leftarrow \frac{1}{\lambda} A$ for convenience,
and use the first order optimality condition
(\cite{borwein2000convex}, Ex.~9.f in \S~3)
for the $W = X\tra{A}$ formula.
\end{proof}
\fi

Finally, we show that Theorem~\ref{thm:primal-dual} applies to
the loss functions that we consider.

\begin{proposition}\label{prop:topk-compatible}
The top-$k$ hinge loss function from Section \ref{sec:topkloss} is \compatible{y_i}.
\end{proposition}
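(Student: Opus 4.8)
The plan is to verify Definition~\ref{def:compatible} directly for $\phi = \phi_k$ with $j = y_i$. Recall the definition: $\phi_k$ is \compatible{y_i} if for every $y \in \Rb^m$ with $y_{y_i} = 0$ we have $\sup\{\inner{y,x} - \phi_k(x) \given x_{y_i} = 0\} = \phi_k^*(y)$. The inequality "$\leq$" is immediate since the constrained supremum cannot exceed the unconstrained one, which is $\phi_k^*(y)$ by definition of the conjugate. So the entire content is the reverse inequality "$\geq$", i.e., the constraint $x_{y_i} = 0$ does not lose anything.

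The key observation making this work is the special structure of the vector $c = \ones - e_{y_i}$ inside $\phi_k(x) = \max\{0, \frac1k \sum_{j=1}^k (x+c)_{[j]}\}$, together with the fact that $c_{y_i} = 0$. I would argue as follows. First, using Proposition~\ref{prop:topk-pairs}, $\phi_k^*(y)$ is finite only for $y \in \Smk$, and in particular only when $y \geq 0$ and $\inner{\ones,y}\leq 1$; if $y \notin \Smk$ then $\phi_k^*(y) = +\infty$ and one checks that the constrained supremum is also $+\infty$ by choosing $x$ appropriately along directions where $x_{y_i}=0$ is not binding (e.g.\ scaling a coordinate violating the relevant constraint of $\Smk$, exactly as in the unconstrained conjugate computation, since none of those test directions need a nonzero $y_i$-th component). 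So it remains to treat $y \in \Smk$, where we must exhibit, for any $\eps>0$, a point $x$ with $x_{y_i}=0$ achieving $\inner{y,x}-\phi_k(x) \geq -\inner{c,y} - \eps = \phi_k^*(y) - \eps$. Take any maximizer (or near-maximizer) $\bar x$ of the unconstrained problem and consider the shifted point $x = \bar x - \bar x_{y_i}\ones$. Then $x_{y_i}=0$, and because $\phi_k(a) = \max\{\inner{a+c,\lambda}\given\lambda\in\Smk\}$ with $\inner{\ones,\lambda}$ possibly strictly less than $1$, a uniform shift by $t\ones$ changes $\phi_k$ by at most $t\inner{\ones,\lambda^\star}$ — I need the precise bookkeeping here, but the point is that $\inner{y, x} - \phi_k(x)$ and $\inner{y,\bar x}-\phi_k(\bar x)$ differ in a controlled way, and since $y_{y_i}=0$, the linear term $\inner{y,x}$ is unchanged by the shift. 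The remaining discrepancy involves $\bar x_{y_i}$ times $(\inner{\ones,y} - \inner{\ones,\lambda^\star})$; because $y \in \Smk$ and $\lambda^\star$ attains the max in the variational formula, this can be shown to be non-negative (or handled by a limiting argument sending the relevant slack to zero), giving the desired inequality.

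An alternative and probably cleaner route: observe that $\phi_k(x) = \phi_k(x - x_{y_i}\ones)$ whenever the active value $\frac1k\sum_{j=1}^k(x+c)_{[j]}$ is nonnegative — more precisely, prove the sandwich $\phi_k(x - x_{y_i}\ones) \geq \phi_k(x) - x_{y_i}$ directly from $(x+c)_{[j]} \mapsto (x+c)_{[j]} - x_{y_i}$, using that at most the top $k$ terms contribute with weight $\tfrac1k$ each and the ground-truth coordinate value $(x+c)_{y_i} = x_{y_i}$. Combining with $\inner{y, x - x_{y_i}\ones} = \inner{y,x}$ (from $y_{y_i}=0$) and $\inner{\ones,y}\leq 1$ (from $y\in\Smk$) then yields $\inner{y, x-x_{y_i}\ones} - \phi_k(x-x_{y_i}\ones) \geq \inner{y,x} - \phi_k(x) - (1-\inner{\ones,y})(-x_{y_i})$ type bounds; iterating/optimizing over the original $x$ recovers $\phi_k^*(y)$.

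I expect the main obstacle to be the edge case where $\inner{\ones, y} < 1$ strictly (the interior of the sum-constraint of $\Smk$), since then the shift argument leaves a residual term proportional to $x_{y_i}(1 - \inner{\ones,y})$ whose sign must be controlled; this is where one genuinely uses that $y_{y_i}=0$ forces $y$ to "live" on the remaining coordinates and that driving the unconstrained maximizer's $y_i$-th coordinate down is never penalized. Handling this carefully — likely via the variational representation $\phi_k(a)=\max_{\lambda\in\Smk}\inner{a+c,\lambda}$ and a minimax swap — is the crux; the rest is routine conjugate manipulation.
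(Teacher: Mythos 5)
Your overall strategy --- verifying Definition~\ref{def:compatible} directly and reducing everything to the variational representation $\phi_k(a)=\max\{\inner{a+c,\lambda}\given\lambda\in\Smk\}$ --- is the same as the paper's. The paper simply repeats the conjugate computation of Proposition~\ref{prop:topk-pairs} with the extra constraint $a_{y_i}=0$: after the minimax swap, the inner maximum $\max\{\inner{a,b-\lambda}\given a_{y_i}=0\}$ is finite only if $\lambda_l=b_l$ for all $l\neq y_i$, and then $-\inner{c,\lambda}=-\inner{c,b}$ because $c_{y_i}=0$. For $b\in\Smk$ with $b_{y_i}=0$ one takes $\lambda=b$ and is done; this two-line computation disposes of the case you label the ``crux'' with no shift argument and no residual bookkeeping, so I would replace your second and third paragraphs (which you leave unfinished) by it. Note also a local slip: $\inner{y,\,x-x_{y_i}\ones}=\inner{y,x}-x_{y_i}\inner{\ones,y}$, so the linear term is invariant under your shift only when $\inner{\ones,y}=0$, not when $y_{y_i}=0$.

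The genuine gap is your treatment of $y\notin\Smk$. The claim that the constrained supremum is then also $+\infty$ ``along directions where $x_{y_i}=0$ is not binding'' is false, precisely because of the coordinate coupling $x_l\le\tfrac1k\inner{\ones,x}$ that distinguishes $\Smk$ from the standard simplex. Take $m=3$, $k=2$, $y_i=3$, so $c=(1,1,0)$, and $b=(\tfrac12,\tfrac1{10},0)$. Then $b\notin\Delta_2$ because $b_1>\tfrac12\inner{\ones,b}=\tfrac3{10}$, hence $\phi_2^*(b)=+\infty$; yet $\lambda^0=(\tfrac12,\tfrac1{10},\tfrac25)\in\Delta_2$, and the bound $\phi_2(a)\ge\inner{a+c,\lambda^0}=\inner{b,a}+\tfrac35$, valid for every $a$ with $a_3=0$, gives $\sup\{\inner{b,a}-\phi_2(a)\given a_3=0\}\le-\tfrac35<+\infty$. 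In words, a coordinate of $b$ that violates its cap relative to $\inner{\ones,b}$ can be ``rescued'' by placing extra mass on the $y_i$-th coordinate of $\lambda$, and the constrained supremum cannot detect this. To be fair, this is also the blind spot of the paper's own one-line proof, which tacitly asserts that a $\lambda\in\Smk$ agreeing with $b$ off coordinate $y_i$ exists only if $b\in\Smk$; the example refutes that for $k\ge2$. The equality of Definition~\ref{def:compatible} therefore genuinely holds only for $y\in\dom\phi_k^*=\Smk$, and a correct write-up must either restrict the claim to that case or rule out such rescuing $\lambda$ --- your proposal does neither, and the $y\in\Smk$ part it does attempt is left as a sketch.
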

\iflongversion
\begin{proof}
Let $c = \ones - e_{y_i}$ and consider the %
loss $\phi_k$.
As in Proposition~\ref{prop:topk-pairs}, we have
\begin{align*}
\max_{a, \, a_{y_i} = 0} \{ \inner{a, b} - \phi_k(a) \} =
\min_{\lambda \in \Smk} \{ - \inner{c, \lambda}
+ \max_{a, \, a_{y_i} = 0} \inner{a, b - \lambda} \} =
\phi_k^*(b) ,
\end{align*}
where we used that $c_{y_i} = 0$ and $b_{y_i} = 0$
(\cf Definition~\ref{def:compatible}), \ie
the $y_i$-th coordinate has no influence.
\end{proof}
\fi
We have repeated the derivation from Section 5.7 in \cite{shalev2014accelerated}
as there is a typo in the optimization problem (20) leading to the 
conclusion that $a_{y_i,i}$ must be $0$ at the optimum. Lemma~\ref{lem:conjugate}
fixes this by making the requirement $a_{y_i,i} = - \sum_{j \neq y_i} a_{j,i}$ explicit.
Note that this modification is already mentioned in their pseudo-code
for Prox-SDCA.

\subsection{Optimization of \Topk\ Multiclass SVM via Prox-SDCA}
\label{sec:topk-sdca}

\begin{wrapfigure}[16]{R}{0.5\textwidth}\vspace*{-1.3em}
\begin{minipage}[c]{.95\linewidth}%
\setlength{\intextsep}{.25em}%
\centering%
\begin{algorithm}[H]
\caption{Top-$k$ Multiclass SVM}
\label{alg:topk-sdca}
\begin{algorithmic}[1]
\footnotesize
  \STATE {\bfseries Input:}
training data $\{(x_i,y_i)_{i=1}^n\}$,
parameters $k$ (loss), %
$\lambda$ (regularization),
$\epsilon$ (stopping cond.)
  \STATE {\bfseries Output:}
$W \in \Rb^{d \times m}$, $A \in \Rb^{m \times n}$
  \STATE {\bfseries Initialize:}
$W \leftarrow 0$, $A \leftarrow 0$
  \REPEAT
    \STATE randomly permute training data
    \FOR{$i=1$ \TO $n$}
      \STATE $s_i \leftarrow \tra{W} x_i$
\COMMENT{prediction scores}
\label{alg:topk-sdca:scores}
      \STATE $a_i^{\rm old} \leftarrow a_i$
\COMMENT{cache previous values}
      \STATE
$a_i \leftarrow update(k,\lambda, \norms{x_i}, y_i, s_i, a_i)$ \\
\quad \COMMENT{see \S~\ref{sec:sdca:update} for details}
\label{alg:topk-sdca:update-a}
      \STATE $W \leftarrow W + x_i \tra{(a_i - a_i^{\rm old})}$ \\
\quad \COMMENT{rank-$1$ update}
\label{alg:topk-sdca:update-w}
    \ENDFOR
  \UNTIL{relative duality gap is below $\epsilon$}
\end{algorithmic}
\end{algorithm}
\end{minipage}
\end{wrapfigure}

As an optimization scheme,
we employ the proximal stochastic dual coordinate ascent (Prox-SDCA) framework
of Shalev-Shwartz and Zhang \cite{shalev2014accelerated},
which has strong convergence guarantees and is easy to adapt to our problem.
In particular, we iteratively update a batch $a_i \in \Rb^m$
of dual variables corresponding to the training pair $(x_i,y_i)$,
so as to maximize the dual objective $D(A)$
from Theorem~\ref{thm:primal-dual}.
We also maintain the primal variables $W = X\tra{A}$
and stop when the relative duality gap
is below $\epsilon$.
This procedure is summarized in Algorithm~\ref{alg:topk-sdca}.

Let us make a few comments on the advantages of the proposed method.
First, apart from the update step which we discuss below,
all main operations can be computed using a BLAS library,
which makes the overall implementation efficient.
Second, the update step in Line~\ref{alg:topk-sdca:update-a}
is optimal in the sense that it yields
maximal dual objective increase
jointly over $m$ variables.
This is opposed to SGD updates with data-independent step sizes,
as well as to maximal but \emph{scalar} updates in other SDCA variants.
Finally, we have a well-defined stopping criterion
as we can compute the duality gap
(see discussion in \cite{bousquet2008tradeoffs}).
The latter is especially attractive if there is a time budget
for learning.
The algorithm can also be easily kernelized
since $\tra{W} x_i = A K_i$
(\cf Theorem~\ref{thm:primal-dual}).

\subsubsection{Dual Variables Update}
\label{sec:sdca:update}

For the proposed top-$k$  hinge loss from Section \ref{sec:topkloss},
optimization of the dual objective $D(A)$ over $a_i \in \Rb^m$
given other variables fixed is an instance of a regularized (biased)
projection problem onto the top-$k$ simplex
$\Smk(\frac{1}{\lambda n})$.
Let $\wo{a}{j}$ be obtained by removing the $j$-th coordinate
from vector $a$.

\begin{proposition}\label{pro:biasproj}
The following two problems are equivalent
with $\wo{a_i}{y_i} = -x$ and $a_{y_i,i} = \inner{\ones, x}$
\begin{align*}
&\max_{a_i} \{ D(A) \given \inner{\ones, a_i} = 0 \} \; \equiv \;
\min_{x} \{ \norms{b - x} + \rho \inner{\ones, x}^2
\given x \in \Smk(\tfrac{1}{\lambda n}) \} ,
\end{align*}
where
$b = \frac{1}{\inner{x_i,x_i}}%
\left( \wo{q}{y_i} + (1 - q_{y_i})\ones \right)$,
$q = \tra{W} x_i - \inner{x_i,x_i} a_i$ and
$\rho = 1$.%
\end{proposition}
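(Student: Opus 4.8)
The plan is to start from the Fenchel dual $D(A)$ of Theorem~\ref{thm:primal-dual}, freeze all columns $a_j$ with $j \neq i$, and collect the terms that depend on the free column $a_i$. Using $W = X\tra{A}$, a direct expansion of the quadratic term shows that, viewed as a function of $a_i$ alone, $\tfrac{\lambda}{2}\tr(A K\tra{A})$ equals $\tfrac{\lambda}{2}\inner{x_i,x_i}\norms{a_i} + \lambda\inner{a_i,q} + \mathrm{const}$, where $q \bydef \tra{W}x_i - \inner{x_i,x_i}\,a_i$ is exactly the vector in the statement and aggregates the contribution of the other columns. Hence, discarding the additive terms that do not involve $a_i$, maximizing $D(A)$ over $a_i$ subject to $\inner{\ones,a_i}=0$ is equivalent to minimizing
\begin{align*}
\tfrac{1}{n}\,\phi_i^*\big(-\lambda n (a_i - a_{y_i,i}e_{y_i})\big)
+ \tfrac{\lambda}{2}\inner{x_i,x_i}\norms{a_i} + \lambda\inner{a_i,q}
\end{align*}
over the same constraint set.

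Next I would substitute the conjugate $\phi_k^*$ from Proposition~\ref{prop:topk-pairs}, with $c = \ones - e_{y_i}$. The vector $a_i - a_{y_i,i}e_{y_i}$ has a vanishing $y_i$-th coordinate, so after introducing $x \bydef -\wo{a_i}{y_i}$ and using the homogeneity $z \in \Smk(r) \Leftrightarrow z/r \in \Smk$, the effective-domain condition $-\lambda n (a_i - a_{y_i,i}e_{y_i}) \in \Smk$ turns, coordinate by coordinate, into precisely $x \in \Smk(\tfrac{1}{\lambda n})$, i.e.\ $x \geq 0$, $\inner{\ones,x}\leq\tfrac{1}{\lambda n}$, and $x_j\leq\tfrac{1}{k}\inner{\ones,x}$. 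On this domain one has $\phi_i^*\big(-\lambda n (a_i - a_{y_i,i}e_{y_i})\big) = \inner{c,\lambda n (a_i - a_{y_i,i}e_{y_i})} = -\lambda n\inner{\ones,x}$. I would then use $\inner{\ones,a_i}=0$ to write $a_{y_i,i}=\inner{\ones,x}$, which yields the expansions $\norms{a_i}=\inner{\ones,x}^2+\norms{x}$ and $\inner{a_i,q}=\inner{\ones,x}\,q_{y_i}-\inner{x,\wo{q}{y_i}}$.

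Plugging these identities into the objective and multiplying through by the positive constant $2/\big(\lambda\inner{x_i,x_i}\big)$ reduces it, up to the additive constant $\norms{b}$, to $\norms{b-x}+\inner{\ones,x}^2$ with $b = \tfrac{1}{\inner{x_i,x_i}}\big(\wo{q}{y_i} + (1-q_{y_i})\ones\big)$ as claimed; indeed one checks by completing the square that
\begin{align*}
\norms{x} - \tfrac{2}{\inner{x_i,x_i}}\inner{x,\, \wo{q}{y_i} + (1-q_{y_i})\ones} + \inner{\ones,x}^2
= \norms{b-x} - \norms{b} + \inner{\ones,x}^2 .
\end{align*}
Since $\norms{b}$ does not depend on $x$, the minimizers coincide with those of $\min_x\{\norms{b-x}+\rho\inner{\ones,x}^2 \mid x\in\Smk(\tfrac{1}{\lambda n})\}$ for $\rho=1$, and the recovery formulas $\wo{a_i}{y_i}=-x$, $a_{y_i,i}=\inner{\ones,x}$ are exactly the substitution made above. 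I expect the only genuine difficulty to be the bookkeeping: keeping sign conventions consistent, correctly mapping the $m$-dimensional effective-domain constraint (which fixes the $y_i$-th coordinate to zero) onto the lower-dimensional projection constraint on $x$, and tracking the rescaling of the top-$k$ simplex; the degenerate case $x_i=0$ is excluded throughout, as we divide by $\inner{x_i,x_i}$.
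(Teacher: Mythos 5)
Your proposal is correct and follows essentially the same route as the paper's proof: isolate the $a_i$-dependent part of $D(A)$ via $q=\tra{W}x_i-\inner{x_i,x_i}a_i$, evaluate $\phi_i^*$ on its effective domain to get the linear term $\lambda\inner{\ones,x}$ together with the rescaled constraint $x\in\Smk(\tfrac{1}{\lambda n})$, substitute $x=-\wo{a_i}{y_i}$, $a_{y_i,i}=\inner{\ones,x}$, and rescale and complete the square. The only cosmetic difference is that you carry out the completion of the square explicitly where the paper stops at ``collecting the corresponding terms.''
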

\iflongversion
\begin{proof}
Using Proposition~\ref{prop:topk-pairs} 
and Theorem~\ref{thm:primal-dual}, we write
\begin{align*}
\max_{a_i} \{
&-\frac{1}{n} \phi_i^* \left(
- \lambda n ( a_i - a_{y_i,i} e_{y_i}) \right) 
-\frac{\lambda}{2} \tr\left( A K \tra{A} \right)
\given \inner{\ones, a_i} = 0 \} .
\end{align*}
For the loss function, we get
\begin{align*}
-\frac{1}{n} \phi_i^* \left(
- \lambda n ( a_i - a_{y_i,i} e_{y_i}) \right) = 
 \lambda a_{y_i,i} ,
\end{align*}
with $- \lambda n ( a_i - a_{y_i,i} e_{y_i}) \in \Smk$.
One can verify that the latter constraint is equivalent to
$-\wo{a_i}{y_i} \in \Smk(\frac{1}{\lambda n})$,
$a_{y_i,i} = \innern{\ones, -\wo{a_i}{y_i}}$.
Similarly, we write for the regularization term
\begin{align*}
\tr\left( A K \tra{A} \right) = 
K_{ii} \inner{a_i,a_i} + 2 \sum_{j \neq i} K_{ij} \inner{a_i,a_j}
+ {\rm const} ,
\end{align*}
where the ${\rm const}$ does not depend on $a_i$.
Note that
$\sum_{j \neq i} K_{ij} a_j = A K_i - K_{ii} a_i = q$
and can be computed using the ``old'' $a_i$.
Let $x \bydef -\wo{a_i}{y_i}$, we have
\begin{align*}
\inner{a_i,a_i} &= \inner{\ones,x}^2 + \inner{x,x}, &
\inner{q,a_i} &= q_{y_i} \inner{\ones,x} - \innern{\wo{q}{y_i},x}.
\end{align*}
Plugging everything together and multiplying with
$-2/\lambda$, we obtain
\begin{align*}
\min_{x \in \Smk(\frac{1}{\lambda n})}
- 2 \inner{\ones, x}
+ 2 \big(
q_{y_i} \inner{\ones,x} - \innern{\wo{q}{y_i},x} \big)
+ K_{ii} \big(\inner{\ones,x}^2 + \inner{x,x} \big) .
\end{align*}
Collecting the corresponding terms finishes the proof.
\end{proof}
\fi

We discuss in the following section
how to project onto the set $\Smk(\frac{1}{\lambda n})$ efficiently.

\section{Efficient Projection onto the \Topk\ Simplex}
\label{sec:projection}
One of our main technical results is an algorithm
for efficiently computing projections onto $\Smk(r)$,
respectively the biased projection introduced in
Proposition~\ref{pro:biasproj}.
The optimization problem in Proposition~\ref{pro:biasproj}
reduces to the Euclidean projection onto $\Smk(r)$ for $\rho=0$,
and for $\rho>0$ it biases the solution
to be orthogonal to $\ones$.
Let us highlight that $\Smk(r)$ is substantially different from the standard simplex and none of the existing methods
can be used as we discuss below.

\subsection{Continuous Quadratic Knapsack Problem}
\label{sec:proj-knapsack}

Finding the Euclidean projection onto the simplex
is an instance of the general optimization problem
$\min_x \{ \norms{a-x}_2 \given \inner{b, x} \leq r, \;
l \leq x_i \leq u \}$
known as the \emph{continuous quadratic knapsack problem} (CQKP).
For example, to project onto the simplex we set
$b=\ones$, $l=0$ and $r=u=1$.
This is a well examined problem
and several highly efficient algorithms are available (see the surveys \cite{patriksson2008survey,patriksson2015algorithms}).
The first main difference to our set is the upper bound on the $x_i$'s.
All existing algorithms expect that $u$ is \emph{fixed},
which allows them to consider decompositions
$\min_{x_i} \{ (a_i -x_i)^2 \given l \leq x_i \leq u \}$
which can be solved in closed-form.
In our case, the upper bound $\tfrac{1}{k} \inner{\ones,x}$
introduces coupling across all variables,
which makes the existing algorithms not applicable.
A second main difference is the bias term $\rho\inner{\ones,x}^2$
added to the objective. The additional difficulty 
introduced by this term is relatively minor.
Thus we solve the problem for general $\rho$
(including $\rho=0$ for the Euclidean projection onto $\Smk(r)$) even
though we need only $\rho=1$ in Proposition~\ref{pro:biasproj}.
The only case when our problem reduces to CQKP is when
the constraint $\inner{\ones, x} \leq r$ is satisfied with equality.
In that case we can let $u=r/k$ and use any algorithm
for the knapsack problem.
We choose \cite{kiwiel2008variable} since it is easy to implement,
does not require sorting, and scales linearly in practice.
The bias in the projection problem reduces
to a constant $\rho r^2$ in this case and has, therefore, no effect.

\subsection{Projection onto the \Topk\ Cone}
\label{sec:proj-cone}

When the constraint $\inner{\ones, x} \leq r$
is not satisfied with equality at the optimum,
it has essentially no influence on the projection problem
and can be removed.
In that case we are left with the problem of the (biased) projection onto
the top-$k$ cone which we address with the following lemma.

\begin{lemma}\label{lem:proj-topk-cone}
Let $x^* \in \Rb^d$ be the solution to the following optimization problem
\begin{align*}
\min_x \{ \norms{a-x} + \rho \inner{\ones, x}^2 \given
0 \leq x_i \leq \tfrac{1}{k} \inner{\ones, x}, \; i \in [d] \} ,
\end{align*}
and let
$U \bydef \{ i \given x^*_i = \tfrac{1}{k} \inner{\ones, x^*} \}$,\,
$M \bydef \{ i \given 0 < x^*_i < \tfrac{1}{k} \inner{\ones, x^*} \}$,\,
$L \bydef \{ i \given x^*_i = 0 \}$.
\begin{enumerate}
\item If $U = \varnothing$ and $M = \varnothing$, then $x^* = 0$.
\item If $U \neq \varnothing$ and $M = \varnothing$, then 
$U = \{[1], \ldots, [k]\}$,
$x^*_i = \tfrac{1}{k+\rho k^2} \sum_{i=1}^k a_{[i]}$ for $i \in U$,
where $[i]$ is the index of the $i$-th largest component in $a$. 
\item Otherwise ($M \neq \varnothing$),
the following system of linear equations holds
\begin{align}\label{lem:proj-topk-cone:sys}
\begin{cases}
u &= \big(
\abs{M} \sum_{i \in U} a_i + (k - \abs{U}) \sum_{i \in M} a_i
\big) / D , \\
t' &= \big(
\abs{U} (1 + \rho k) \sum_{i \in M} a_i
- (k - \abs{U} + \rho k \abs{M}) \sum_{i \in U} a_i
\big) / D , \\
D &= (k - \abs{U})^2 + (\abs{U} + \rho k^2) \abs{M} ,
\end{cases}
\end{align}
together with the feasibility constraints
on $t \bydef t' + \rho u k$
\begin{align}\label{lem:proj-topk-cone:constr}
\max_{i \in L} a_i &\leq t \leq \min_{i \in M} a_i , &
\max_{i \in M} a_i &\leq t + u \leq \min_{i \in U} a_i ,
\end{align}
and we have
$x^* = \min\{ \max\{0, a - t \}, u \}$.
\end{enumerate}
\end{lemma}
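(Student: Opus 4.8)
The plan is to treat this as a strictly convex quadratic program and extract the structure of its unique minimizer $x^*$ from the KKT conditions, which are necessary and sufficient here since all the constraints are affine. First I would record existence and uniqueness: the objective $\norms{a-x}+\rho\inner{\ones,x}^2$ is strictly convex and coercive, and the top-$k$ cone is a nonempty closed convex set, so $x^*$ exists and is unique.

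Next I would form the Lagrangian with multipliers $\mu_i\ge 0$ for the constraints $-x_i\le 0$ and $\nu_i\ge 0$ for $x_i-\tfrac{1}{k}\inner{\ones,x}\le 0$. Writing $s\bydef\inner{\ones,x^*}$, $u\bydef s/k$ and $N\bydef\sum_i\nu_i$, stationarity reduces (after dividing by $2$) to $x^*_j=a_j-t+\tfrac{1}{2}(\mu_j-\nu_j)$ with $t\bydef\rho s-\tfrac{N}{2k}$. Going through complementary slackness case by case then gives: on $L$, $x^*_j=0$ and $a_j\le t$; on $U$, $x^*_j=u$ and $a_j\ge t+u$; on $M$, $\mu_j=\nu_j=0$, $x^*_j=a_j-t$, and $t<a_j<t+u$. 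Equivalently $x^*=\min\{\max\{0,a-t\},u\}$, the partition $(U,M,L)$ is ordered by the entries of $a$, and $u=\tfrac{1}{k}\inner{\ones,x^*}$ holds by construction. Case~1 is then immediate (if $s=0$ then $x^*=0$, and in this degenerate case we set $U=M=\varnothing$), and Case~2 follows too: when $M=\varnothing$ and $x^*\neq 0$ we have $u>0$ and $s=\sum_{j\in U}x^*_j=\abs{U}u$, forcing $\abs{U}=k$, hence $U=\{[1],\dots,[k]\}$, and substituting $x^*_j=u$ into stationarity on $U$ gives $u=\tfrac{1}{k+\rho k^2}\sum_{i=1}^k a_{[i]}$.

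For Case~3 ($M\neq\varnothing$, so $u>0$) I would pin down $u$ and $t'\bydef t-\rho uk$ from two identities. The first is the consistency relation $ku=s=\inner{\ones,x^*}=\abs{U}u+\sum_{j\in M}a_j-\abs{M}t$. The second eliminates $N$: complementary slackness on $U$ gives $\nu_j=2(a_j-t-u)$ with $\nu_j=0$ elsewhere, so $N=2\bigl(\sum_{j\in U}a_j-\abs{U}(t+u)\bigr)$, and combining this with $t=\rho ku-\tfrac{N}{2k}$ gives the second equation. Substituting $t=t'+\rho uk$ throughout turns the two into a $2\times 2$ linear system in $(u,t')$ whose determinant is exactly $D=(k-\abs{U})^2+(\abs{U}+\rho k^2)\abs{M}$; Cramer's rule then produces precisely the formulas in (\ref{lem:proj-topk-cone:sys}). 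The feasibility constraints (\ref{lem:proj-topk-cone:constr}) are just the inequalities $a_j\le t$ on $L$ (that is, $\mu_j\ge 0$), $a_j\ge t+u$ on $U$ ($\nu_j\ge 0$), and $t\le a_j\le t+u$ on $M$, rewritten at the thresholds $t$ and $t+u$. Conversely, given any partition of $[d]$ and scalars satisfying (\ref{lem:proj-topk-cone:sys})--(\ref{lem:proj-topk-cone:constr}), a direct check shows that $x\bydef\min\{\max\{0,a-t\},u\}$ is feasible with $\tfrac{1}{k}\inner{\ones,x}=u$ and solves the KKT system with the multipliers read off above, so by uniqueness $x=x^*$.

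The algebra turning stationarity and the two identities into (\ref{lem:proj-topk-cone:sys}) is routine bookkeeping. The points that need care are: (i) the boundary ties --- when $a_j=t$ or $a_j=t+u$ the labels $L/M/U$ are not uniquely determined although $x^*$ is, so the characterization must allow the non-strict inequalities in (\ref{lem:proj-topk-cone:constr}) and be phrased as ``there exists a valid partition,'' which is exactly what the projection algorithm enumerates over; and (ii) the degenerate cases $s=0$ and $u=0$, which is where the convention $U=M=\varnothing$ in Case~1 enters and where one must check that the multipliers at coordinates that are simultaneously active and zero remain well defined.
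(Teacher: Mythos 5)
Your proposal is correct and follows essentially the same route as the paper: a KKT analysis of the convex quadratic program, extraction of the thresholded form $x^*=\min\{\max\{0,a-t\},u\}$ from stationarity and complementary slackness, and reduction of the two consistency identities (the sum constraint and the elimination of $\sum_i\nu_i$) to the $2\times 2$ linear system solved by Cramer's rule. The only cosmetic difference is that the paper introduces an auxiliary variable $s=\inner{\ones,x}$ with an explicit equality constraint whose multiplier is $t$, whereas you define $t$ directly as $\rho s-\tfrac{N}{2k}$; your added remarks on uniqueness, the converse verification, and the boundary ties are consistent with (and slightly more explicit than) the paper's argument.
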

\iflongversion
\begin{proof}
We consider an equivalent problem
\begin{align*}
\min_{x,s} \{ \tfrac{1}{2}\norms{a-x} + \tfrac{1}{2}\rho s^2 \given
\inner{\ones, x} = s, \;
0 \leq x_i \leq \tfrac{s}{k}, \; i \in [d] \} .
\end{align*}
Let $t$, $\mu_i \geq 0$, $\nu_i \geq 0$ be the dual variables,
and let $\Lc$ be the Lagrangian:
\begin{align*}
\Lc(x,s,t,\mu,\nu) = \tfrac{1}{2}\norms{a-x} + \tfrac{1}{2}\rho s^2
+ t(\inner{\ones, x} - s) - \inner{\mu, x}
+ \inner{\nu, x - \tfrac{s}{k} \ones} .
\end{align*}
From the KKT conditions, we have that
\begin{align*}
\partial_x \Lc &= x - a + t\ones - \mu + \nu = 0, &
\partial_s \Lc &= \rho s - t - \tfrac{1}{k} \inner{\ones, \nu} = 0, &
\mu_i x_i &= 0, &
\nu_i (x_i - \tfrac{s}{k}) &= 0 .
\end{align*}
We have that
$x_i = \min\{ \max\{0, a_i - t \}, \frac{s}{k} \}$,
$\nu_i = \max\{0, a_i -t - \frac{s}{k} \}$,
and $s = \frac{1}{\rho}(t + \frac{1}{k} \inner{\ones, \nu})$.
Let $p \bydef \inner{\ones, \nu}$.
We have $t = \rho s - \frac{p}{k}$.
Using the definition of the sets $U$ and $M$, we get
\begin{align*}
s &= \sum_{i \in U} \frac{s}{k} + \sum_{i \in M} (a_i - t) =
\sum_{i \in M} a_i - \abs{M}(\rho s - \frac{p}{k}) + \abs{U} \frac{s}{k}, \\
p &= \sum_{i \in U} (a_i - t - \frac{s}{k}) =
\sum_{i \in U} a_i - \abs{U}(\rho s - \frac{p}{k}) - \abs{U} \frac{s}{k}.
\end{align*}
In the case $U\neq \varnothing$ and $M=\varnothing$ we get the simplified equations
\begin{align*} 
s&=\sum_{i \in U} \frac{s}{k} = |U|\frac{s}{k}  \quad \Longrightarrow \quad |U|=k,\\
p&= \sum_{i \in U} a_i - k\rho s + p - s \quad \Longrightarrow \quad x_i = \frac{s}{k}=\frac{1}{k+\rho \,k^2}\sum_{i \in U} a_i, \; i \in U.
\end{align*}
In the remaining case 
solving this system for $u \bydef \frac{s}{k}$
and $t' \bydef - \frac{p}{k}$,
we get exactly the system in (\ref{lem:proj-topk-cone:sys}).
The constraints (\ref{lem:proj-topk-cone:constr})
follow from the definition of the sets $U$, $M$, $L$, and
ensure that the computed thresholds $(t,u)$ are compatible
with the corresponding partitioning of the index set.
\end{proof}
\fi

We now show how to check if the (biased) projection is $0$.
For the standard simplex, where the cone
is the positive orthant $\Rb^d_+$, the projection is $0$
when all $a_i \leq 0$.
It is slightly more involved for $\Smk$.

\begin{lemma}\label{lem:proj-topk-cone-zero}
The biased projection $x^*$ onto the top-$k$ cone is zero if 
$\sum_{i=1}^k a_{[i]} \leq 0$ (sufficient condition).
If $\rho=0$ this is also necessary.
\end{lemma}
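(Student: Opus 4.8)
The statement has two parts: a sufficient condition ($\sum_{i=1}^k a_{[i]} \le 0 \Rightarrow x^* = 0$), and a converse in the unbiased case ($\rho = 0$). The natural approach is to work with the characterization of the projection from Proposition~\ref{prop:topk-pairs}: projecting onto $\Smk(r)$ (or onto the cone) is dual to evaluating $\phi_k$-type functionals. But the cleanest route is probably just optimality/KKT reasoning, reusing the case analysis of Lemma~\ref{lem:proj-topk-cone}. I would first reduce to the cone problem: since we are asking when the projection onto the top-$k$ cone is $0$, and $0$ lies in $\Smk(r)$ for every $r \ge 0$, the question is unaffected by the knapsack constraint $\inner{\ones,x}\le r$.

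\textbf{Sufficiency.} I would argue that $x^* = 0$ iff $0$ is optimal for $\min_x\{\tfrac12\norms{a-x} + \tfrac12\rho\inner{\ones,x}^2 : x \in \mathrm{cone}\}$, which by convexity holds iff the negative gradient at $0$, namely $a$ (the $\rho$-term contributes $\rho\inner{\ones,0}\ones = 0$), makes an obtuse angle with every feasible direction: $\inner{a, x} \le 0$ for all $x$ in the top-$k$ cone $C_k \bydef \{x : 0 \le x_i \le \tfrac1k\inner{\ones,x}\}$. So the whole thing reduces to showing
\[
\max_{x \in C_k,\ \norms{x} \le 1}\ \inner{a,x} = 0 \iff \sum_{i=1}^k a_{[i]} \le 0 \quad (\text{and } \le 0 \text{ always when } \rho=0).
\]
Actually the sign: $\inner{a,x}\le 0$ for all $x\in C_k$ is what I need for sufficiency. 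The key observation is that $C_k$ is exactly the conic hull of the vectors $\{\tfrac1k\sum_{i\in I} e_i : |I| = k\}$ together with $\{e_j\}$... no — more carefully, $C_k$ is the cone over the polytope $\Smk$, and by the remark after Definition~\ref{def:topk-simplex} (see Figure~\ref{fig:plot-simplex}) the vertices of $\Smk$ are $0$ and the $\binom{m}{k}$ points $\tfrac1k \sum_{i\in I}e_i$ with $|I|=k$. Hence $\sup_{x\in\Smk}\inner{a,x} = \max\{0, \tfrac1k\max_{|I|=k}\sum_{i\in I}a_i\} = \max\{0, \tfrac1k\sum_{i=1}^k a_{[i]}\}$, which is exactly $\phi_k$ evaluated appropriately, consistent with Proposition~\ref{prop:topk-pairs}. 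Since $C_k = \mathrm{cone}(\Smk)$, we get $\inner{a,x}\le 0$ for all $x\in C_k$ iff $\sup_{x\in\Smk}\inner{a,x} \le 0$ iff $\sum_{i=1}^k a_{[i]} \le 0$. Combined with the optimality condition $0 = x^*$ iff $\inner{a,x}\le 0$ on $C_k$, this gives sufficiency — and when $\rho = 0$ it is an iff, which is exactly the claimed necessity.

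\textbf{The remaining subtlety} is the gradient computation when $\rho > 0$: the objective $g(x) = \tfrac12\norms{a-x} + \tfrac12\rho\inner{\ones,x}^2$ has $\nabla g(0) = -a$ still, so $0$ is optimal iff $\inner{-\nabla g(0), x} = \inner{a,x}\le 0$ on $C_k$, i.e. the $\rho$-term genuinely drops out at the origin and the sufficient condition is literally unchanged. For necessity with $\rho>0$ one cannot conclude the converse because the extra strongly-convex-in-$\inner{\ones,x}$ term can force $x^* = 0$ even when $\sum_{i=1}^k a_{[i]} > 0$ (small positive mass would be killed by the quadratic penalty) — so I would simply not claim it, matching the statement. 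I expect the main obstacle to be nailing the identification $C_k = \mathrm{cone}(\Smk)$ and the vertex structure of $\Smk$ cleanly; once that is in hand, everything reduces to the first-order optimality condition at $0$ plus the linear-programming evaluation $\max_{x\in\Smk}\inner{a,x} = \max\{0,\tfrac1k\sum_{i=1}^k a_{[i]}\}$, which is immediate from Lemma~\ref{lem:sumk} or directly from Proposition~\ref{prop:topk-pairs}.
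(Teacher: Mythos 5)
Your proof is correct and is essentially the paper's argument: the paper characterizes $x^*=0$ via the normal cone $a\in N_K(0)$, i.e.\ $\max_{x\in K}\inner{a,x}\leq 0$, which is exactly your first-order optimality condition at the origin, and it evaluates that support function by placing mass $s/k$ on the $k$ largest coordinates of $a$ — the same computation you obtain from $K=\mathrm{cone}(\Smk)$ and Proposition~\ref{prop:topk-pairs}. One remark: your closing worry about necessity for $\rho>0$ is unfounded and slightly at odds with your own gradient computation — since $\nabla g(0)=-a$ regardless of $\rho$, a direction $x\in K$ with $\inner{a,x}>0$ gives $g(tx)=g(0)-t\inner{a,x}+O(t^2)<g(0)$ for small $t$ (the quadratic penalty cannot beat a first-order decrease), so the condition is in fact necessary for all $\rho\geq 0$; you thereby prove slightly more than the paper, which treats $\rho>0$ only by the monotonicity argument that the extra nonnegative term vanishes at $x=0$.
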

\iflongversion
\begin{proof}
Let $K \bydef \{ x \given 0 \leq x_i \leq \frac{1}{k} \inner{\ones, x} \}$
be the top-$k$ cone.
It is known that the Euclidean projection of $a$ onto $K$
is $0$ if and only if $a \in N_K(0) \bydef
\{y \given \forall x \in K, \, \inner{y,x} \leq 0 \}$,
\ie $a$ is in the normal cone to $K$ at $0$.
Therefore, we obtain as an equivalent condition that
$\max_{x \in K} \inner{a,x} \leq 0$.
Take any $x \in K$ and let $s = \inner{\ones, x}$.
If $s > 0$, we have that at least $k$ components in $x$ must be positive.
To maximize $\inner{a,x}$, we would have exactly $k$ positive
$x_i = \frac{s}{k}$ corresponding to the $k$ largest components in $a$.
That would result in $\inner{a,x} = \frac{s}{k} \sum_{i=1}^k a_{[i]}$,
which is non-positive if and only if $\sum_{i=1}^k a_{[i]} \leq 0$.

For $\rho>0$, the
objective function has an additional term
$\rho \inner{\ones, x}^2$ that vanishes at $x=0$.
Therefore, if $x=0$ is optimal for the Euclidean projection,
it must also be optimal for the biased projection.
\end{proof}
\fi

\textbf{Projection.}
Lemmas~\ref{lem:proj-topk-cone} and \ref{lem:proj-topk-cone-zero}
suggest a simple algorithm for the (biased) projection onto the top-$k$ cone.
First, we check if the projection is constant (cases $1$ and $2$
in Lemma~\ref{lem:proj-topk-cone}).
In case $2$, %
we compute $x$ and check if it
is compatible with the corresponding sets $U$, $M$, $L$.
In the general case~$3$, we suggest a simple exhaustive search
strategy. We sort $a$ and loop over the feasible partitions $U$, $M$, $L$
until we find a solution to (\ref{lem:proj-topk-cone:sys})
that satisfies (\ref{lem:proj-topk-cone:constr}).
Since we know that $0 \leq \abs{U} < k$ and
$k \leq \abs{U} + \abs{M} \leq d$,
we can limit the search to $(k-1)(d-k+1)$ iterations in the worst case,
where each iteration requires a constant number of operations.
For the biased projection, we leave $x=0$ as the fallback case
as Lemma~\ref{lem:proj-topk-cone-zero} gives only a sufficient
condition.
This yields a runtime complexity of $O(d \log(d) + kd)$,
which is comparable to simplex projection algorithms based on sorting.

\subsection{Projection onto the \Topk\ Simplex}
\label{sec:proj-simplex}

As we argued in \S~\ref{sec:proj-knapsack},
the (biased) projection onto the top-$k$ simplex
becomes either the knapsack problem or the
(biased) projection onto the top-$k$ cone
depending on the constraint 
$\inner{\ones, x} \leq r$ at the optimum.
The following Lemma provides a way to check
which of the two cases apply.

\begin{lemma}\label{lem:proj-topk-simplex}
Let $x^* \in \Rb^d$ be the solution to the following optimization problem
\begin{align*}
\min_x \{ \norms{a-x} + \rho \inner{\ones, x}^2 \given
\inner{\ones, x} \leq r, \;
0 \leq x_i \leq \tfrac{1}{k} \inner{\ones, x}, \; i \in [d] \} ,
\end{align*}
let $(t,u)$ be the optimal thresholds such that
$x^* = \min\{ \max\{0, a - t \}, u \}$,
and let $U$ be defined as in Lemma~\ref{lem:proj-topk-cone}.
Then it must hold that
$\lambda = t + \frac{p}{k} - \rho r \geq 0$,
where
$p = \sum_{i \in U} a_i - \abs{U}(t + u)$.
\end{lemma}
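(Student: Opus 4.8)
The plan is to run the same KKT argument as in the proof of Lemma~\ref{lem:proj-topk-cone}, now carrying along the budget constraint $\inner{\ones,x}\le r$. First I would introduce the slack variable $s=\inner{\ones,x}$ and rewrite the problem as
\[
\min_{x,s}\Big\{ \tfrac12\norms{a-x} + \tfrac12\rho s^2 \;\Big|\; \inner{\ones,x}=s,\ s\le r,\ 0\le x_i\le \tfrac sk,\ i\in[d] \Big\},
\]
and form the Lagrangian with a sign-free multiplier $t$ for the equality $\inner{\ones,x}=s$, a multiplier $\lambda\ge 0$ for $s\le r$, and $\mu_i,\nu_i\ge 0$ for $0\le x_i$ and $x_i\le s/k$. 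The only new term compared with Lemma~\ref{lem:proj-topk-cone} is $\lambda(s-r)$, and correspondingly the only new complementary-slackness relation is $\lambda(s-r)=0$.

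Next I would read off the KKT system. Stationarity in $x$, namely $x-a+t\ones-\mu+\nu=0$, together with complementary slackness $\mu_ix_i=0$ and $\nu_i(x_i-s/k)=0$, yields the same coordinate description as before: $x^*_i=\min\{\max\{0,a_i-t\},u\}$ with $u=s/k$, and $\nu_i=[a_i-t-u]_+$. Hence $\nu_i=a_i-(t+u)$ for $i\in U$ and $\nu_i=0$ otherwise, so $\inner{\ones,\nu}=\sum_{i\in U}a_i-\abs{U}(t+u)=p$. Stationarity in $s$ then reads $\rho s-t+\lambda-\tfrac1k\inner{\ones,\nu}=0$, i.e.
\[
\lambda = t+\tfrac pk-\rho s .
\]

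It remains to turn $s$ into $r$ and to fix the sign. Dual feasibility already gives $\lambda\ge 0$. In the situation this lemma is meant to detect — the budget constraint active, which is the ``knapsack'' branch of \S~\ref{sec:proj-knapsack} — we have $\inner{\ones,x^*}=r$, hence $s=r$ and $u=r/k$, so $\lambda=t+p/k-\rho r\ge 0$ as claimed. (If instead the budget constraint is slack, the problem coincides with the biased projection onto the top-$k$ cone of Lemma~\ref{lem:proj-topk-cone}; evaluating $t+p/k-\rho r$ and finding it negative is exactly the certificate that we are in that branch and should fall back to Lemma~\ref{lem:proj-topk-cone}.)

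I do not expect a serious obstacle: this is a routine first-order optimality computation, essentially the proof of Lemma~\ref{lem:proj-topk-cone} with one extra multiplier. The points that need care are (i) keeping the sign conventions straight so that $\lambda$ comes out as the multiplier of the \emph{inequality} $s\le r$ (hence nonnegative) rather than of the sign-free equality $\inner{\ones,x}=s$, and (ii) re-deriving $\inner{\ones,\nu}=p$ from $u=s/k$, $\nu_i=[a_i-t-u]_+$ and the definition of $U$ in Lemma~\ref{lem:proj-topk-cone}, so that stationarity in $s$ can be rewritten purely in terms of $t$, $u$, and the $a_i$.
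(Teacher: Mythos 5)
Your proposal is correct and follows essentially the same route as the paper's proof: the same reformulation with the slack variable $s$, the same Lagrangian with the extra multiplier $\lambda$ for $s \le r$, the same KKT system, and the same case distinction on whether the budget constraint is active. The only (welcome) difference is that you spell out explicitly why $\inner{\ones,\nu} = p$, which the paper leaves as a one-line remark.
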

\iflongversion
\begin{proof}
As in Lemma~\ref{lem:proj-topk-cone}, we consider an equivalent problem
\begin{align*}
\min_{x,s} \{ \tfrac{1}{2}\norms{a-x} + \tfrac{1}{2}\rho s^2 \given
\inner{\ones, x} = s, \; s \leq r, \;
0 \leq x_i \leq \tfrac{s}{k}, \; i \in [d] \} .
\end{align*}
Let $t$, $\lambda \geq 0$, $\mu_i \geq 0$, $\nu_i \geq 0$
be the dual variables,
and let $\Lc$ be the Lagrangian:
\begin{align*}
\Lc = \tfrac{1}{2}\norms{a-x} + \tfrac{1}{2}\rho s^2
+ t(\inner{\ones, x} - s) + \lambda (s - r) - \inner{\mu, x}
+ \inner{\nu, x - \tfrac{s}{k} \ones} .
\end{align*}
From the KKT conditions, we have that
\begin{align*}
\partial_x \Lc = x - a + t\ones - \mu + \nu &= 0, &
\partial_s \Lc = \rho s - t + \lambda - \tfrac{1}{k} \inner{\ones, \nu} &= 0,
&& \\
\mu_i x_i &= 0, &
\nu_i (x_i - \tfrac{s}{k}) &= 0 , &
\lambda (s - r) &= 0.
\end{align*}
If $s < r$, then $\lambda = 0$ and we recover the top-$k$ cone
problem of Lemma~\ref{lem:proj-topk-cone}.
Otherwise, we have that $s = r$ and
$\lambda = t + \tfrac{1}{k} \inner{\ones, \nu} - \rho r \geq 0$.
The fact that $\nu_i = \max\{0, a_i -t - u \}$,
where $u = \frac{r}{k}$, completes the proof.
\end{proof}
\fi

\textbf{Projection.}
We can now use Lemma~\ref{lem:proj-topk-simplex}
to compute the (biased) projection onto $\Smk(r)$ as follows.
First, we check the special cases of zero and constant projections,
as we did before.
If that fails, we proceed with the knapsack problem
since it is faster to solve.
Having the thresholds $(t,u)$ and the partitioning into the sets
$U$, $M$, $L$, we compute the value of $\lambda$ as given
in Lemma~\ref{lem:proj-topk-simplex}.
If $\lambda \geq 0$, we are done.
Otherwise, we know that $\inner{\ones, x} < r$ and go directly
to the general case $3$ in Lemma~\ref{lem:proj-topk-cone}.

\iflongversion
\section{Optimization of \Topk\ Usunier Loss}
\label{sec:usunier}
In this section we show how the Usunier version of the top-$k$ hinge loss
(\ref{eq:topk-usu}) can be optimized using the Prox-SDCA framework
from \S~\ref{sec:optimization}.
The two main ingredients that we discuss are the conjugate loss
and the (biased) projection.
It turns out that the only difference between
the conjugate of the top-$k$ hinge loss (\ref{eq:topk-loss}) introduced above
and the conjugate of (\ref{eq:topk-usu}) are their
effective domains.

\begin{proposition}\label{prop:topk-usu-pairs}
A primal-conjugate pair for the top-$k$ Usunier loss (\ref{eq:topk-usu}) is
\begin{align}\label{eq:topk-usu-pair}
\tilde{\phi}_k(a) &=
\frac{1}{k} \sum_{j=1}^k \max \Big\{0, (a + c)_{[j]} \Big\} , &
\tilde{\phi}_k^*(b) &=
\begin{cases}
- \inner{c, b} & \text{if } b \in \SUmk , \\
+\infty & \text{otherwise} ,
\end{cases}
\end{align}
where
\begin{align*}
\SUmk(r) &\bydef
\left\{ x \given \inner{\ones, x} \leq r, \;
0 \leq x_i \leq \tfrac{1}{k} ,
\; i \in [m] \right\} .
\end{align*}
Moreover,
$
\tilde{\phi}_k(a) = \max \{ \inner{a + c, \lambda} \given \lambda \in \SUmk \}
$.
\end{proposition}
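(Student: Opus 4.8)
The plan is to mirror the proof of Proposition~\ref{prop:topk-pairs} almost verbatim; the only genuinely new point is the identification of the effective domain, which changes to $\SUmk$ because the box constraint on the dual variable will read $\lambda_i \leq \tfrac{1}{k}$ rather than $\lambda_i \leq \tfrac{1}{k}\inner{\ones,\lambda}$.

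First I would reduce the componentwise positive-part sum to a plain ``sum of the $k$ largest'' of a nonnegative vector. Since $t \mapsto \max\{0,t\}$ is nondecreasing, the $j$-th largest entry of the vector $h \in \Rb^m$ with entries $h_i \bydef \max\{0,(a+c)_i\}$ equals $\max\{0,(a+c)_{[j]}\}$, so $\tilde{\phi}_k(a) = \tfrac{1}{k}\sum_{j=1}^k h_{[j]}$ with $h \geq 0$. Then I would apply Lemma~\ref{lem:sumk} to $h$ and repeat the Lagrangian computation of Proposition~\ref{prop:topk-pairs}, the single modification being that $\tilde{\phi}_k$ carries no outer threshold at $0$ (it is automatically nonnegative once $h \geq 0$), so the linear program has no $s \geq 0$ slack and hence no multiplier $\beta$: minimizing the Lagrangian over the auxiliary variables forces $\inner{\ones,\lambda} = k$ and $\lambda_j + \mu_j = 1$ with $\mu_j \geq 0$, hence $0 \leq \lambda_j \leq 1$. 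Dividing by $k$ and substituting $\lambda \leftarrow \lambda/k$, zero duality gap yields $\tfrac{1}{k}\sum_{j=1}^k h_{[j]} = \max\{\inner{h,\lambda} \given 0 \leq \lambda_j \leq \tfrac{1}{k},\ \inner{\ones,\lambda} = 1\}$. Because $h \geq 0$, relaxing $\inner{\ones,\lambda} = 1$ to $\inner{\ones,\lambda} \leq 1$ leaves the maximum unchanged (adding mass never decreases $\inner{h,\lambda}$, and the box has total capacity $\tfrac{m}{k} \geq 1$), so the feasible set becomes exactly $\SUmk$.

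Next I would put $a+c$ back in place of $h = [a+c]_+$ inside the objective. For any $\lambda \in \SUmk$ we have $\lambda \geq 0$, hence $\inner{h,\lambda} \geq \inner{a+c,\lambda}$, giving one inequality; conversely, zeroing out the coordinates of a feasible $\lambda$ on which $(a+c)_i < 0$ produces a $\lambda' \in \SUmk$ with $\inner{a+c,\lambda'} = \inner{h,\lambda}$, giving the reverse. This establishes the ``Moreover'' identity $\tilde{\phi}_k(a) = \max\{\inner{a+c,\lambda} \given \lambda \in \SUmk\}$. The conjugate then follows exactly as in Proposition~\ref{prop:topk-pairs}: writing $\tilde{\phi}_k^*(b) = \max_a \min_{\lambda \in \SUmk}\{\inner{a,b} - \inner{a+c,\lambda}\}$, interchanging the max and min (the objective is bilinear and $\SUmk$ is compact convex), and using $\max_a \inner{a,b-\lambda} = +\infty$ unless $b = \lambda$, we get $\tilde{\phi}_k^*(b) = -\inner{c,b}$ for $b \in \SUmk$ and $+\infty$ otherwise.

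The only step that needs real care is the one combining the positive part with the relaxation of the sum constraint from $\inner{\ones,\lambda} = 1$ to $\inner{\ones,\lambda} \leq 1$: one must check that this relaxation is harmless precisely because $[a+c]_+ \geq 0$ and the box $0 \leq \lambda_i \leq \tfrac{1}{k}$ has enough total capacity. Everything else is a line-by-line transcription of the arguments already given for $\phi_k$, with the $\beta$ multiplier deleted.
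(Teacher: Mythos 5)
Your proof is correct and follows essentially the same route as the paper: both reduce the loss to the sum of the $k$ largest entries of $h = [a+c]_+$, apply Lemma~\ref{lem:sumk}, and dualize the resulting linear program to arrive at $\tilde{\phi}_k(a) = \max\{\inner{a+c,\lambda} \given \lambda \in \SUmk\}$, after which the conjugate computation is identical to that in Proposition~\ref{prop:topk-pairs}. The only difference is bookkeeping: the paper folds the positive part into the LP by treating $h$ as a decision variable with constraints $h_j \geq a_j + c_j$, $h_j \geq 0$ and dualizes once, whereas you fix $h = [a+c]_+$ first and then handle the relaxation of $\inner{\ones,\lambda}=1$ to $\inner{\ones,\lambda}\leq 1$ and the substitution of $a+c$ for $h$ by two separate (and correct) auxiliary arguments.
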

\begin{proof}
The proof is similar to the proof of Proposition~\ref{prop:topk-pairs};
the main step is as follows:
\begin{align*}
\tilde{\phi}_k(a) &= \min_{t,\xi,h} \big\{ t + \tfrac{1}{k} \inner{\ones, \xi} \, \given \,
\xi_j \geq h_j - t, \;
\xi_j \geq 0, \;
h_j \geq a_j + c_j, \;
h_j \geq 0 \big\} \\
&= \max_{\lambda} \big\{
\inner{a + c, \lambda} \, \given \,
\inner{\ones, \lambda} \leq 1, \;
0 \leq \lambda_j \leq \tfrac{1}{k}
\big\} .
\end{align*}
\end{proof}

Note that the upper bounds on $x_i$'s are now fixed to $1/k$,
which means the Euclidean projection onto the set $\SUmk$ is an instance
of the continuous quadratic knapsack problem from \S~\ref{sec:proj-knapsack}.
Unfortunately, the proximal step in the SDCA framework corresponds
to a \emph{biased} projection where there is an additional $\ell_2$ regularizer
on the sum $\inner{\ones, x}$ coming from the regularizer in the training objective.
To address this issue, we follow the derivation given in the proofs of
Lemmas~\ref{lem:proj-topk-cone} and \ref{lem:proj-topk-simplex}.

The update step for the top-$k$ Usunier loss (\ref{eq:topk-usu}) is equivalent to
(with $l=0$ and $u=1/k$):
\begin{align*}
\min_{x,s} \{ \tfrac{1}{2}\norms{a-x} + \tfrac{1}{2}\rho s^2 \given
\inner{\ones, x} = s, \; s \leq r, \;
l \leq x_i \leq u, \; i \in [d] \} .
\end{align*}
Let $t$, $\lambda \geq 0$, $\mu_i \geq 0$, $\nu_i \geq 0$
be the dual variables,
and let $\Lc$ be the Lagrangian:
\begin{align*}
\Lc = \tfrac{1}{2}\norms{a-x} + \tfrac{1}{2}\rho s^2
+ t(\inner{\ones, x} - s) + \lambda (s - r) - \inner{\mu, l \ones - x}
+ \inner{\nu, x - u \ones} .
\end{align*}
From the KKT conditions, we have that
\begin{align*}
\partial_x \Lc = x - a + t\ones - \mu + \nu &= 0, &
\partial_s \Lc = \rho s - t + \lambda &= 0,
&& \\
\mu_i (l - x_i) &= 0, &
\nu_i (x_i - u) &= 0, &
\lambda (s - r) &= 0,
\end{align*}
which then leads to
\begin{align*}
x &= a - t\ones + \mu - \nu = \min\{ \max \{l, x - t\}, u \} , &
\lambda &= t - \rho s.
\end{align*}
Now, we can do case distinction based on the sign of $\lambda$.
If $\lambda > 0$, then $\inner{\ones, x} = s = r$ and $t > \rho r$.
In this case $\tfrac{1}{2}\rho s^2 = \tfrac{1}{2}\rho r^2 \equiv \rm const$,
therefore this term can be ignored and we get the knapsack problem
from \S~\ref{sec:proj-knapsack}.
Otherwise, if $s < r$, then $\lambda = 0$ and $t=\rho s$.
Using the index sets $U$, $M$ and $L$ as in Lemma~\ref{lem:proj-topk-cone},
we have that
\begin{align*}
t = \rho \Big( \sum_L l + \sum_M (a_i - t) + \sum_U u \Big)
= \rho \Big( l \abs{L} + u \abs{U} - t \abs{M} + \sum_M a_i \Big).
\end{align*}
Solving for $t$ with $\rho > 0$, we obtain that
\begin{align}\label{eq:proj-usu-t}
t = \Big( l \abs{L} + u \abs{U} + \sum_M a_i \Big)
/ \Big( \frac{1}{\rho} + \abs{M} \Big).
\end{align}

\textbf{Projection.}
To compute the (biased) projection,
we follow the same steps as in \S~\ref{sec:proj-simplex}.
First, we solve the knapsack problem using the algorithm of \cite{kiwiel2008variable},
which also computes the dual variable $t$.
If $t > \rho r$, then we are done;
otherwise, we sort $a$ and loop over the feasible index sets $U$, $M$, and $L$.
We stop once we find a $t$ that satisfies (\ref{eq:proj-usu-t}) and
is compatible with the corresponding index sets.

\fi

\section{Experimental Results}
\label{sec:experiments}
\begin{table}[ht]\scriptsize\centering\setlength{\tabcolsep}{.4em}
\begin{tabular}{l|cccccc||cccccc}
\multicolumn{1}{c}{} &
\multicolumn{6}{c}{\textbf{\small Caltech 101 Silhouettes}} &
\multicolumn{6}{c}{\textbf{\small MIT Indoor 67}} \\\toprule
Method &
Top-1 & Top-2 & Top-3 & Top-4 & Top-5 & Top-10 &
Method & Top-1 & Method & Top-1 & Method & Top-1 \\
\midrule
\midrule
\MethodTopK{1} \cite{swersky2012probabilistic} &  $62.1$ & - & $79.6$ & - & $83.1$ & - &
BLH \cite{bu2013superpixel} & $48.3$ &
DGE \cite{doersch2013mid} & $66.87$ &
RAS \cite{razavian2014cnn} & $69.0$ \\
\MethodTopK{2} \cite{swersky2012probabilistic} &  $61.4$ & - & $79.2$ & - & $83.4$ & - &
SP \cite{sun2013learning} & $51.4$ &
ZLX \cite{zhou2014learning} & $68.24$ &
KL \cite{koskela2014convolutional} & $70.1$ \\
\MethodTopK{5} \cite{swersky2012probabilistic} &  $60.2$ & - & $78.7$ & - & $83.4$ & - &
JVJ \cite{juneja2013blocks} & $63.10$ &
GWG \cite{gong2014multi} & $68.88$ & \\
\midrule
\midrule
Method &
Top-1 & Top-2 & Top-3 & Top-4 & Top-5 & Top-10 &
Top-1 & Top-2 & Top-3 & Top-4 & Top-5 & Top-10 \\
\midrule
\midrule
\MethodSvmOva & $61.81$ & $73.13$ & $76.25$ & $77.76$ & $78.89$ & $83.57$ &
$71.72$ & $81.49$ & $84.93$ & $86.49$ & $87.39$ & $90.45$ \\
\MethodTopPushMulti & $63.11$ & $75.16$ & $78.46$ & $80.19$ & $81.97$ & $86.95$ &
$70.52$ & $83.13$ & $86.94$ & $90.00$ & $91.64$ & $95.90$ \\
\iflongversion
\midrule
\MethodSvmPrecKMulti{1} & $61.29$ & $73.26$ & $76.12$ & $77.76$ & $79.11$ & $83.27$ &
$69.03$ & $80.67$ & $85.00$ & $87.16$ & $88.21$ & $91.87$ \\
\MethodSvmPrecKMulti{5} & $61.73$ & $73.99$ & $76.90$ & $78.50$ & $79.63$ & $84.22$ &
$69.18$ & $81.42$ & $85.45$ & $87.61$ & $88.43$ & $91.87$ \\
\MethodSvmPrecKMulti{10} & $61.90$ & $73.95$ & $76.68$ & $78.46$ & $79.67$ & $84.14$ &
$69.18$ & $81.42$ & $85.45$ & $87.61$ & $88.43$ & $91.87$ \\
\fi
\midrule
\MethodSvmRecKMulti{1} & $61.55$ & $73.13$ & $77.03$ & $79.41$ & $80.97$ & $85.18$ &
$71.57$ & $83.06$ & $87.69$ & $90.45$ & $92.24$ & $96.19$ \\
\iflongversion
\MethodSvmRecKMulti{2} & $61.25$ & $73.00$ & $76.33$ & $77.94$ & $79.15$ & $83.49$ &
$71.42$ & $81.49$ & $85.60$ & $87.24$ & $88.36$ & $92.16$ \\
\MethodSvmRecKMulti{3} & $61.51$ & $72.95$ & $76.55$ & $78.72$ & $80.49$ & $84.74$ &
$71.42$ & $81.57$ & $85.67$ & $87.39$ & $88.43$ & $92.24$ \\
\MethodSvmRecKMulti{4} & $61.55$ & $72.95$ & $76.68$ & $78.80$ & $80.58$ & $84.70$ &
$71.42$ & $81.57$ & $85.67$ & $87.24$ & $88.28$ & $92.01$ \\
\fi
\MethodSvmRecKMulti{5} & $61.60$ & $72.87$ & $76.51$ & $78.76$ & $80.54$ & $84.74$ &
$71.49$ & $81.49$ & $85.45$ & $87.24$ & $88.21$ & $92.01$ \\
\MethodSvmRecKMulti{10} & $61.51$ & $72.95$ & $76.46$ & $78.72$ & $80.54$ & $84.92$ &
$71.42$ & $81.49$ & $85.52$ & $87.24$ & $88.28$ & $92.16$ \\
\midrule
\iflongversion
\MethodWsabie{m}{0} & $62.33$ & $74.95$ & $78.59$ & $81.45$ & $83.66$ & $89.08$ &
$69.33$ & $83.06$ & $88.66$ & $91.72$ & $93.43$ & $97.54$ \\
\MethodWsabie{m}{1} & $59.69$ & $65.97$ & $68.92$ & $71.61$ & $73.82$ & $80.88$ &
$67.39$ & $80.15$ & $85.22$ & $88.88$ & $90.90$ & $95.90$ \\
\MethodWsabie{m}{2} & $57.39$ & $64.33$ & $67.88$ & $70.13$ & $71.95$ & $77.59$ &
$62.61$ & $76.57$ & $82.39$ & $86.19$ & $88.36$ & $93.81$ \\
\MethodWsabie{m}{4} & $56.78$ & $63.94$ & $67.36$ & $70.05$ & $72.08$ & $78.76$ &
$63.13$ & $76.87$ & $82.24$ & $85.67$ & $88.43$ & $94.63$ \\
\MethodWsabie{m}{8} & $57.17$ & $63.50$ & $67.01$ & $69.79$ & $71.87$ & $77.85$ &
$63.73$ & $77.24$ & $83.36$ & $86.87$ & $89.10$ & $94.63$ \\
\midrule
\MethodWsabie{192}{0} & $62.29$ & $76.25$ & $79.71$ & $81.40$ & $83.09$ & $88.17$ &
$69.78$ & $82.99$ & $88.36$ & $91.49$ & $93.51$ & $97.31$ \\
\MethodWsabie{192}{1} & $59.56$ & $65.97$ & $69.44$ & $71.65$ & $73.91$ & $79.45$ &
$67.24$ & $81.34$ & $85.60$ & $89.03$ & $91.19$ & $95.75$ \\
\MethodWsabie{192}{2} & $56.78$ & $63.29$ & $67.10$ & $69.87$ & $71.69$ & $78.37$ &
$63.28$ & $77.61$ & $84.03$ & $87.99$ & $89.93$ & $94.85$ \\
\MethodWsabie{192}{4} & $58.13$ & $64.37$ & $67.62$ & $69.92$ & $71.56$ & $78.15$ &
$62.54$ & $76.79$ & $84.10$ & $87.61$ & $89.18$ & $94.03$ \\
\MethodWsabie{192}{8} & $57.04$ & $66.28$ & $70.18$ & $73.39$ & $75.34$ & $82.79$ &
$63.06$ & $77.84$ & $84.55$ & $88.06$ & $90.37$ & $94.70$ \\
\midrule
\fi
\MethodWsabie{256}{0} & $62.68$ & $76.33$ & $79.41$ & $81.71$ & $83.18$ & $88.95$ &
$70.07$ & $84.10$ & $89.48$ & $92.46$ & $94.48$ & $\mathbf{97.91}$ \\
\MethodWsabie{256}{1} & $59.25$ & $65.63$ & $69.22$ & $71.09$ & $72.95$ & $79.71$ &
$68.13$ & $81.49$ & $86.64$ & $89.63$ & $91.42$ & $95.45$ \\
\MethodWsabie{256}{2} & $55.09$ & $61.81$ & $66.02$ & $68.88$ & $70.61$ & $76.59$ &
$64.63$ & $78.43$ & $84.18$ & $88.13$ & $89.93$ & $94.55$ \\
\iflongversion
\MethodWsabie{256}{4} & $56.52$ & $62.29$ & $65.76$ & $68.01$ & $70.13$ & $76.59$ &
$60.90$ & $75.97$ & $82.84$ & $86.79$ & $89.63$ & $94.63$ \\
\MethodWsabie{256}{8} & $55.79$ & $61.60$ & $65.58$ & $68.23$ & $70.39$ & $77.55$ &
$62.39$ & $75.15$ & $81.42$ & $85.82$ & $88.88$ & $94.03$ \\
\fi
\midrule
\midrule
\iflongversion
\MethodSvmTopK{1} & $62.81$ & $74.60$ & $77.76$ & $80.02$ & $81.97$ & $86.91$ &
                    $\mathbf{73.96}$ & $85.22$ & $89.25$ & $91.94$ & $93.43$ & $96.94$ \\
\MethodSvmTopK{2} & $63.11$ & $76.16$ & $79.02$ & $81.01$ & $82.75$ & $87.65$ &
                    $73.06$ & $85.67$ & $90.37$ & $92.24$ & $94.48$ & $97.31$ \\
\MethodSvmTopK{3} & $\mathbf{63.37}$ & $76.72$ & $79.67$ & $81.49$ & $83.57$ & $88.25$ &
                    $71.57$ & $\mathbf{86.27}$ & $\mathbf{91.12}$ & $93.21$ & $94.70$ & $97.24$ \\
\MethodSvmTopK{4} & $63.20$ & $76.64$ & $79.76$ & $82.36$ & $84.05$ & $88.64$ &
                    $71.42$ & $85.67$ & $90.75$ & $\mathbf{93.28}$ & $\mathbf{94.78}$ & $97.84$ \\
\MethodSvmTopK{5} & $63.29$ & $76.81$ & $80.02$ & $82.75$ & $84.31$ & $88.69$ &
                    $70.67$ & $85.75$ & $90.37$ & $93.21$ & $94.70$ & $\mathbf{97.91}$ \\
\MethodSvmTopK{10} & $62.98$ & $\mathbf{77.33}$ & $80.49$ & $82.66$ & $84.57$ & $89.55$ &
                    $70.00$ & $85.45$ & $90.00$ & $93.13$ & $94.63$ & $97.76$ \\
\MethodSvmTopK{20} & $59.21$ & $75.64$ & $\mathbf{80.88}$ & $\mathbf{83.49}$ & $\mathbf{85.39}$ & $\mathbf{90.33}$ &
                    $65.90$ & $84.10$ & $89.93$ & $92.69$ & $94.25$ & $97.54$ \\
\else
\MethodSvmTopK{1} & $62.81$ & $74.60$ & $77.76$ & $80.02$ & $81.97$ & $86.91$ &
                    $\mathbf{73.96}$ & $85.22$ & $89.25$ & $91.94$ & $93.43$ & $96.94$ \\
\MethodSvmTopK{10} & $62.98$ & $\mathbf{77.33}$ & $80.49$ & $82.66$ & $84.57$ & $89.55$ &
                    $70.00$ & $\mathbf{85.45}$ & $90.00$ & $93.13$ & $\mathbf{94.63}$ & $97.76$ \\
\MethodSvmTopK{20} & $59.21$ & $75.64$ & $80.88$ & $\mathbf{83.49}$ & $\mathbf{85.39}$ & $\mathbf{90.33}$ &
                    $65.90$ & $84.10$ & $89.93$ & $92.69$ & $94.25$ & $97.54$ \\
\fi
\midrule
\iflongversion
\MethodUsuTopK{1} & $62.81$ & $74.60$ & $77.76$ & $80.02$ & $81.97$ & $86.91$ &
                    $73.96$ & $85.22$ & $89.25$ & $91.94$ & $93.43$ & $96.94$ \\
\MethodUsuTopK{2} & $63.55$ & $76.25$ & $79.28$ & $81.14$ & $82.62$ & $87.91$ &
                    $\mathbf{74.03}$ & $85.90$ & $89.78$ & $92.24$ & $94.10$ & $97.31$ \\
\MethodUsuTopK{3} & $63.94$ & $76.64$ & $79.71$ & $81.36$ & $83.44$ & $87.99$ &
                    $72.99$ & $\mathbf{86.34}$ & $90.60$ & $92.76$ & $94.40$ & $97.24$ \\
\MethodUsuTopK{4} & $63.94$ & $76.85$ & $80.15$ & $82.01$ & $83.53$ & $88.73$ &
                    $73.06$ & $86.19$ & $\mathbf{90.82}$ & $92.69$ & $\mathbf{94.48}$ & $97.69$ \\
\MethodUsuTopK{5} & $63.59$ & $77.03$ & $80.36$ & $82.57$ & $84.18$ & $89.03$ &
                    $72.61$ & $85.60$ & $90.75$ & $92.99$ & $\mathbf{94.48}$ & $97.61$ \\
\MethodUsuTopK{10} & $\mathbf{64.02}$ & $77.11$ & $80.49$ & $83.01$ & $84.87$ & $89.42$ &
                    $71.87$ & $85.30$ & $90.45$ & $\mathbf{93.36}$ & $94.40$ & $\mathbf{97.76}$ \\
\MethodUsuTopK{20} & $63.37$ & $\mathbf{77.24}$ & $\mathbf{81.06}$ & $\mathbf{83.31}$ & $\mathbf{85.18}$ & $\mathbf{90.03}$ &
                    $71.94$ & $85.30$ & $90.07$ & $92.46$ & $94.33$ & $97.39$ \\
\else
\MethodUsuTopK{1} & $62.81$ & $74.60$ & $77.76$ & $80.02$ & $81.97$ & $86.91$ &
                    $\mathbf{73.96}$ & $85.22$ & $89.25$ & $91.94$ & $93.43$ & $96.94$ \\
\MethodUsuTopK{10} & $\mathbf{64.02}$ & $77.11$ & $80.49$ & $83.01$ & $84.87$ & $89.42$ &
                    $71.87$ & $85.30$ & $\mathbf{90.45}$ & $\mathbf{93.36}$ & $94.40$ & $97.76$ \\
\MethodUsuTopK{20} & $63.37$ & $77.24$ & $\mathbf{81.06}$ & $83.31$ & $85.18$ & $90.03$ &
                    $71.94$ & $85.30$ & $90.07$ & $92.46$ & $94.33$ & $97.39$ \\
\fi
\bottomrule
\end{tabular}
\iflongversion
\caption{Top-$k$ accuracy (\%). %
{\bfseries Top section:} State of the art. %
{\bfseries Middle section:} Baseline methods.
\MethodSvmPrecKMulti{k} and \MethodSvmRecKMulti{k}
are \MethodSvmPerf\ \cite{joachims2005support};
\MethodWsabie{m}{Q} is $\rm Wsabie^{++}$ \cite{Gupta2014}
with an embedding dimension $m$ and the queue size $Q$;
in the first part, $m=101$ for Caltech and $m=67$ for Indoor.
{\bfseries Bottom section:}
Top-$k$ SVMs:
\MethodSvmTopK{k} -- with the loss (\ref{eq:topk-loss});
\MethodUsuTopK{k} -- with the loss (\ref{eq:topk-usu}).
}
\label{tbl:small}
\end{table}\vspace*{-.5em}
\begin{table}[ht]\scriptsize\centering\setlength{\tabcolsep}{.4em}
\else
\\[2em]
\fi
\begin{tabular}{l|cccccc}
\multicolumn{7}{c}{\small\hspace*{5em}\textbf{SUN 397} (10 splits)} \\\toprule
\multirow{2}{*}{Top-1 accuracy}
&
XHE \cite{xiao2010sun} & $38.0$ &
LSH \cite{lapin2014scalable} & $49.48 \pm 0.3$ &
ZLX \cite{zhou2014learning} & $54.32 \pm 0.1$ \\
&
SPM \cite{sanchez2013image} & $47.2 \pm 0.2$ &
GWG \cite{gong2014multi} & $51.98$ &
KL \cite{koskela2014convolutional} & $54.65 \pm 0.2$ \\
\midrule
\midrule
Method & Top-1 & Top-2 & Top-3 & Top-4 & Top-5 & Top-10 \\
\midrule
\midrule
\MethodSvmOva & $55.23 \pm 0.6$ & $66.23 \pm 0.6$ & $70.81 \pm 0.4$ & $73.30 \pm 0.2$ & $74.93 \pm 0.2$ & $79.00 \pm 0.3$ \\
\MethodTopPushOva & $53.53 \pm 0.3$ & $65.39 \pm 0.3$ & $71.46 \pm 0.2$ & $75.25 \pm 0.1$ & $77.95 \pm 0.2$ & $85.15 \pm 0.3$ \\
\midrule
\MethodSvmRecKOva{1} & $52.95 \pm 0.2$ & $65.49 \pm 0.2$ & $71.86 \pm 0.2$ & $75.88 \pm 0.2$ & $78.72 \pm 0.2$ & $86.03 \pm 0.2$ \\
\iflongversion
\MethodSvmRecKOva{2} & $52.80 \pm 0.2$ & $64.18 \pm 0.2$ & $68.81 \pm 0.2$ & $71.42 \pm 0.2$ & $73.17 \pm 0.2$ & $77.69 \pm 0.3$ \\
\MethodSvmRecKOva{3} & $40.50 \pm 0.3$ & $56.01 \pm 0.2$ & $64.96 \pm 0.2$ & $70.95 \pm 0.2$ & $75.26 \pm 0.2$ & $86.32 \pm 0.2$ \\
\MethodSvmRecKOva{4} & $46.59 \pm 0.4$ & $59.87 \pm 0.6$ & $66.77 \pm 0.5$ & $70.95 \pm 0.4$ & $73.75 \pm 0.3$ & $79.86 \pm 0.2$ \\
\fi
\MethodSvmRecKOva{5} & $50.72 \pm 0.2$ & $64.74 \pm 0.3$ & $70.75 \pm 0.3$ & $74.02 \pm 0.3$ & $76.06 \pm 0.3$ & $80.66 \pm 0.2$ \\
\MethodSvmRecKOva{10} & $50.92 \pm 0.2$ & $64.94 \pm 0.2$ & $70.95 \pm 0.2$ & $74.14 \pm 0.2$ & $76.21 \pm 0.2$ & $80.68 \pm 0.2$ \\
\midrule
\midrule
\MethodSvmTopK{1} & $58.16 \pm 0.2$ & $71.66 \pm 0.2$ & $78.22 \pm 0.1$ & $82.29 \pm 0.2$ & $84.98 \pm 0.2$ & $91.48 \pm 0.2$ \\
\iflongversion
\MethodSvmTopK{2} & $58.81 \pm 0.2$ & $72.71 \pm 0.2$ & $79.33 \pm 0.2$ & $83.29 \pm 0.2$ & $85.94 \pm 0.2$ & $92.19 \pm 0.2$ \\
\MethodSvmTopK{3} & $\mathbf{58.97 \pm 0.1}$ & $73.19 \pm 0.2$ & $79.86 \pm 0.2$ & $83.83 \pm 0.2$ & $86.46 \pm 0.2$ & $92.57 \pm 0.2$ \\
\MethodSvmTopK{4} & $58.95 \pm 0.1$ & $73.54 \pm 0.2$ & $80.25 \pm 0.2$ & $84.20 \pm 0.2$ & $86.78 \pm 0.2$ & $92.82 \pm 0.2$ \\
\MethodSvmTopK{5} & $58.92 \pm 0.1$ & $\mathbf{73.66 \pm 0.2}$ & $80.46 \pm 0.2$ & $84.44 \pm 0.3$ & $87.03 \pm 0.2$ & $92.98 \pm 0.2$ \\
\MethodSvmTopK{10} & $58.00 \pm 0.2$ & $73.65 \pm 0.1$ & $\mathbf{80.80 \pm 0.1}$ & $\mathbf{84.81 \pm 0.2}$ & $\mathbf{87.45 \pm 0.2}$ & $93.40 \pm 0.2$ \\
\MethodSvmTopK{20} & $55.98 \pm 0.3$ & $72.51 \pm 0.2$ & $80.22 \pm 0.2$ & $84.54 \pm 0.2$ & $87.37 \pm 0.2$ & $\mathbf{93.62 \pm 0.2}$ \\
\else
\MethodSvmTopK{10} & $58.00 \pm 0.2$ & $73.65 \pm 0.1$ & $80.80 \pm 0.1$ & $84.81 \pm 0.2$ & $87.45 \pm 0.2$ & $93.40 \pm 0.2$ \\
\MethodSvmTopK{20} & $55.98 \pm 0.3$ & $72.51 \pm 0.2$ & $80.22 \pm 0.2$ & $84.54 \pm 0.2$ & $87.37 \pm 0.2$ & $93.62 \pm 0.2$ \\
\fi
\midrule
\MethodUsuTopK{1} & $58.16 \pm 0.2$ & $71.66 \pm 0.2$ & $78.22 \pm 0.1$ & $82.29 \pm 0.2$ & $84.98 \pm 0.2$ & $91.48 \pm 0.2$ \\
\iflongversion
\MethodUsuTopK{2} & $58.80 \pm 0.2$ & $72.65 \pm 0.2$ & $79.26 \pm 0.2$ & $83.21 \pm 0.2$ & $85.85 \pm 0.2$ & $92.14 \pm 0.2$ \\
\MethodUsuTopK{3} & $59.14 \pm 0.2$ & $73.21 \pm 0.2$ & $79.81 \pm 0.2$ & $83.77 \pm 0.2$ & $86.36 \pm 0.2$ & $92.51 \pm 0.2$ \\
\MethodUsuTopK{4} & $59.24 \pm 0.1$ & $73.58 \pm 0.2$ & $80.18 \pm 0.2$ & $84.15 \pm 0.2$ & $86.71 \pm 0.2$ & $92.73 \pm 0.2$ \\
\MethodUsuTopK{5} & $59.28 \pm 0.2$ & $73.78 \pm 0.2$ & $80.45 \pm 0.3$ & $84.36 \pm 0.3$ & $86.96 \pm 0.3$ & $92.93 \pm 0.2$ \\
\fi
\MethodUsuTopK{10} & $\mathbf{59.32 \pm 0.1}$ & $\mathbf{74.13 \pm 0.2}$ & $80.91 \pm 0.2$ & $84.92 \pm 0.2$ & $87.49 \pm 0.2$ & $93.36 \pm 0.2$ \\
\MethodUsuTopK{20} & $58.65 \pm 0.2$ & $73.96 \pm 0.2$ & $\mathbf{80.95 \pm 0.2}$ & $\mathbf{85.05 \pm 0.2}$ & $\mathbf{87.70 \pm 0.2}$ & $\mathbf{93.64 \pm 0.2}$ \\
\bottomrule
\end{tabular}
\iflongversion
\\[2em]
\begin{tabular}{l|cccccc|cccccc}
\multicolumn{1}{c}{} &
\multicolumn{6}{c}{\textbf{\small Places 205} (val)} &
\multicolumn{6}{c}{\textbf{\small ImageNet 2012} (val)} \\\toprule
Method & Top-1 & Top-2 & Top-3 & Top-4 & Top-5 & Top-10
& Top-1 & Top-2 & Top-3 & Top-4 & Top-5 & Top-10 \\
\midrule
\midrule
{\tiny ZLX \cite{zhou2014learning} / BVLC \cite{jia2014caffe}} &
$50.0$ & - & - & - & $81.1$ & - &
$\mathbf{57.4}$ & - & - & - & $\mathbf{80.4}$ & - \\
\midrule
\midrule
\MethodTopPushOva & $38.45$ & $47.33$ & $53.25$ & $57.29$ & $60.30$ & $69.91$ &
$55.49$ & $68.05$ & $\mathbf{73.89}$ & $\mathbf{77.34}$ & $79.72$ & $\mathbf{85.99}$ \\
\midrule
\midrule
\MethodSvmTopK{1} & $50.63$ & $64.47$ & $71.44$ & $75.50$ & $78.54$ & $86.17$ &
$\mathbf{56.61}$ & $67.31$ & $72.43$ & $75.45$ & $77.67$ & $83.71$ \\
\MethodSvmTopK{2} & $51.05$ & $65.74$ & $73.10$ & $77.49$ & $80.74$ & $88.43$ &
$56.60$ & $68.09$ & $73.25$ & $76.36$ & $78.62$ & $84.55$ \\
\MethodSvmTopK{3} & $\mathbf{51.31}$ & $66.17$ & $73.23$ & $77.86$ & $81.26$ & $89.37$ &
$56.56$ & $68.27$ & $73.60$ & $76.76$ & $79.03$ & $84.96$ \\
\MethodSvmTopK{4} & $51.24$ & $\mathbf{66.30}$ & $73.48$ & $78.08$ & $81.40$ & $89.74$ &
$56.52$ & $68.36$ & $73.80$ & $77.06$ & $79.30$ & $85.25$ \\
\MethodSvmTopK{5} & $50.80$ & $66.23$ & $\mathbf{73.67}$ & $78.19$ & $81.43$ & $89.95$ &
$56.46$ & $\mathbf{68.40}$ & $\mathbf{73.85}$ & $77.20$ & $79.39$ & $85.41$ \\
\MethodSvmTopK{10} & $50.10$ & $65.76$ & $73.38$ & $\mathbf{78.30}$ & $\mathbf{81.62}$ & $\mathbf{90.14}$ &
$55.89$ & $68.16$ & $73.80$ & $\mathbf{77.31}$ & $\mathbf{79.75}$ & $85.77$ \\
\MethodSvmTopK{20} & $49.25$ & $64.85$ & $72.62$ & $77.67$ & $81.14$ & $89.99$ &
$54.94$ & $67.53$ & $73.50$ & $77.08$ & $79.59$ & $\mathbf{85.88}$ \\
\midrule
\MethodUsuTopK{1} & $50.63$ & $64.45$ & $71.45$ & $75.50$ & $78.54$ & $86.17$ &
$56.61$ & $67.31$ & $72.43$ & $75.45$ & $77.67$ & $83.71$ \\
\MethodUsuTopK{2} & $51.03$ & $65.58$ & $72.73$ & $77.40$ & $80.55$ & $88.40$ &
$56.91$ & $67.98$ & $73.19$ & $76.23$ & $78.50$ & $84.43$ \\
\MethodUsuTopK{3} & $51.27$ & $65.98$ & $73.37$ & $77.91$ & $81.25$ & $89.30$ &
$57.00$ & $68.27$ & $73.51$ & $76.68$ & $78.89$ & $84.84$ \\
\MethodUsuTopK{4} & $\mathbf{51.38}$ & $66.20$ & $73.56$ & $78.04$ & $81.40$ & $89.78$ &
$56.99$ & $68.39$ & $73.62$ & $76.86$ & $79.15$ & $85.09$ \\
\MethodUsuTopK{5} & $51.25$ & $\mathbf{66.25}$ & $\mathbf{73.66}$ & $78.26$ & $81.42$ & $89.91$ &
$\mathbf{57.09}$ & $\mathbf{68.45}$ & $73.68$ & $76.95$ & $79.27$ & $85.24$ \\
\MethodUsuTopK{10} & $50.94$ & $66.13$ & $73.52$ & $\mathbf{78.36}$ & $\mathbf{81.69}$ & $\mathbf{90.19}$ &
$56.90$ & $68.42$ & $\mathbf{73.95}$ & $77.31$ & $79.53$ & $85.62$ \\
\MethodUsuTopK{20} & $50.50$ & $65.79$ & $73.38$ & $78.17$ & $81.60$ & $90.12$ &
$56.48$ & $68.29$ & $73.83$ & $\mathbf{77.32}$ & $\mathbf{79.60}$ & $\mathbf{85.81}$ \\
\bottomrule
\end{tabular}
\fi
\caption{Top-$k$ accuracy (\%). %
{\bfseries Top section:} State of the art. %
{\bfseries Middle section:} Baseline methods.
{\bfseries Bottom section:}
Top-$k$ SVMs:
\MethodSvmTopK{k} -- with the loss (\ref{eq:topk-loss});
\MethodUsuTopK{k} -- with the loss (\ref{eq:topk-usu}).
\iflongversion
Results for Places 205 and ImageNet 2012 are computed on the validation set.
\vspace*{-1em}
\fi
}
\label{tbl:all}
\end{table}

We have two main goals in the experiments.
First, we show
that the (biased) projection onto the top-$k$ simplex
is scalable and
comparable to an efficient algorithm \cite{kiwiel2008variable}
\iflongversion
for the simplex projection.
\else
for the simplex projection (see the supplement).
\fi
Second, we show that the top-$k$ multiclass SVM
using both versions of the top-$k$ hinge loss
(\ref{eq:topk-loss}) and (\ref{eq:topk-usu}),
denoted \MethodSvmTopK{k} and \MethodUsuTopK{k} respectively,
leads to improvements in top-$k$ accuracy
consistently over all datasets and choices of $k$.
In particular, we note improvements compared to the multiclass SVM
of Crammer and Singer \cite{crammer2001algorithmic},
which corresponds to \MethodSvmTopK{1}/\MethodUsuTopK{1}.
We release our implementation of the projection procedures
and both SDCA solvers as a C++
library\footnote{\url{https://github.com/mlapin/libsdca}}
with a Matlab interface.

\iflongversion
\subsection{Scaling of the Projection onto the \Topk\ Simplex}

\begin{wrapfigure}[10]{r}{0.5\textwidth}%
\includegraphics[width=.95\linewidth]{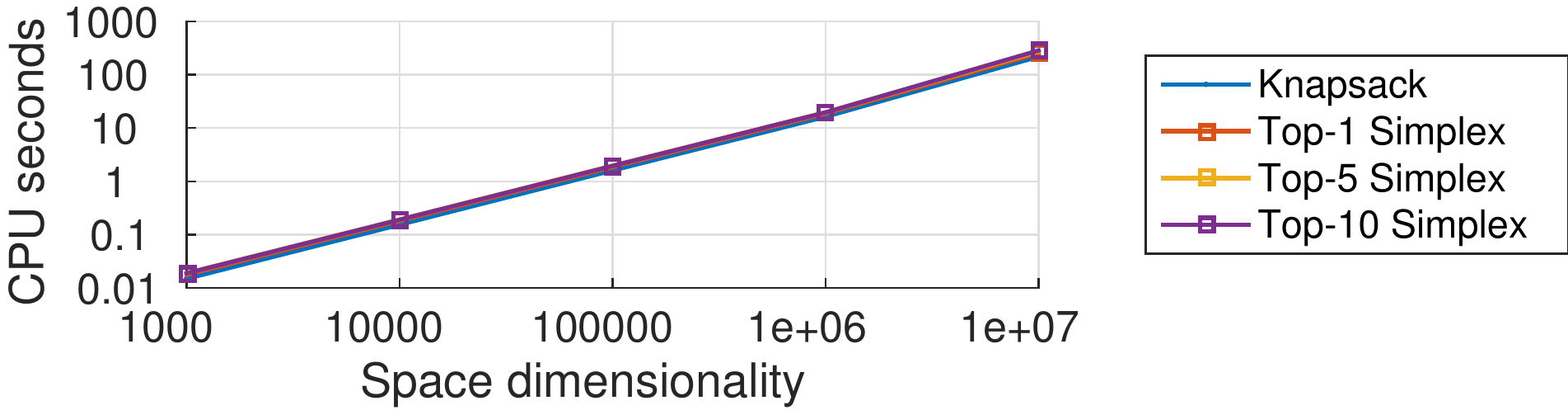}%
\caption{%
Scaling of the projection onto the top-$k$ simplex
compared to the knapsack problem.
}\label{fig:plot-scaling}%
\end{wrapfigure}

We follow the experimental setup of \cite{liu2009efficient}.
We sample $1000$ points from the normal distribution $\Nc(0,1)$
and solve the projection %
problems using
the algorithm of
\cite{kiwiel2008variable} (denoted as Knapsack)
and using our proposed method of projecting onto
the set $\Smk$ for different values of $k=1,5,10$.
We report the total CPU time taken on a single
Intel(R) Xeon(R) 2.20GHz processor.
As one can see, the scaling is linear in the problem dimension
and the run times are essentially the same.

\fi

\subsection{Image Classification Experiments}
\label{sec:classification}

We evaluate our method on five image classification datasets
of different scale and complexity:
Caltech 101 Silhouettes \cite{swersky2012probabilistic}
($m=101$, $n=4100$),
MIT Indoor 67 \cite{quattoni2009recognizing}
($m=67$, $n=5354$),
SUN 397 \cite{xiao2010sun}
($m=397$, $n=19850$),
Places 205 \cite{zhou2014learning}
($m=205$, $n=2448873$),
and ImageNet 2012 \cite{ILSVRCarxiv14}
($m=1000$, $n=1281167$).
For Caltech, $d=784$, and for the others $d=4096$.
The results on the two large scale datasets are in the supplement.

We cross-validate hyper-parameters in the range
$10^{-5}$ to $10^3$, extending it when the optimal value is
at the boundary.
We use LibLinear \cite{REF08a} for \MethodSvmOva,
\MethodSvmPerf\ \cite{joachims2005support} with the corresponding
loss function for \MethodSvmRecKMulti{k},
and the code provided by \cite{li2014top} for \MethodTopPushMulti.
When a ranking method like \MethodSvmRecKMulti{k} and
\MethodTopPushMulti\ does not scale to a particular dataset
using the reduction of the multiclass to a binary problem
discussed in \S~\ref{sec:comparison},
we use the one-vs-all version of the corresponding method.
We implemented Wsabie$^{++}$ (denoted \MethodWsabie{m}{Q})
based on the pseudo-code from Table~3 in \cite{Gupta2014}.
\iflongversion
Among the baseline methods that we tried,
only \MethodTopPushOva\ scaled to the Places and the ImageNet datasets
both time and memory-wise\footnote{
LibLinear, although being generally fast, required too much memory
for the experiments to be feasible.
}.
\fi

On Caltech 101, we use features provided
by \cite{swersky2012probabilistic}.
For the other datasets, we extract CNN features of a pre-trained CNN
(fc7 layer after ReLU).
For the scene recognition datasets,
we use the Places 205 CNN \cite{zhou2014learning}
and for ILSVRC 2012 we use the Caffe reference model
\cite{jia2014caffe}.

Experimental results are given in
\iflongversion
Tables~\ref{tbl:small},~\ref{tbl:all}.
\else
Table~\ref{tbl:all}.
\fi
First, we note that our method is scalable to
large datasets with millions of training examples,
such as Places 205 and ILSVRC 2012
(results in the supplement).
Second, we observe that optimizing the top-$k$ hinge loss
(both versions)
yields consistently better top-$k$ performance.
This might come at the cost of a decreased top-$1$ accuracy
(\eg on MIT Indoor 67),
but, interestingly, may also result in a noticeable increase in the top-$1$ accuracy
on larger datasets like Caltech 101 Silhouettes and SUN 397.
This resonates with our argumentation that optimizing for top-$k$
is often more appropriate for datasets with a large number of classes.

Overall, we get systematic increase in top-$k$ accuracy
over all datasets that we examined.
For example,
we get the following improvements in top-$5$ accuracy
with our \MethodSvmTopK{10} compared to %
\MethodSvmTopK{1}:
$+2.6\%$ on Caltech 101,
$+1.2\%$ on MIT Indoor 67,
and $+2.5\%$ on SUN 397.

\section{Conclusion}
\label{sec:conclusion}
We demonstrated scalability and effectiveness
of the proposed top-$k$ multiclass SVM on five image recognition datasets
leading to consistent improvements in top-$k$ performance.
In the future, one could study if the top-$k$ hinge loss (\ref{eq:topk-loss})
can be generalized to the family of ranking losses \cite{usunier2009ranking}.
Similar to the top-$k$ loss, this could lead to tighter convex upper bounds on
the corresponding discrete losses.

{\small
\bibliographystyle{ieee}
\bibliography{main}

\begin{thebibliography}{10}\itemsep=-1pt

\bibitem{bordes2007solving}
A.~Bordes, L.~Bottou, P.~Gallinari, and J.~Weston.
\newblock Solving multiclass support vector machines with {LaRank}.
\newblock In {\em ICML}, pages 89--96, 2007.

\bibitem{borwein2000convex}
J.~M. Borwein and A.~S. Lewis.
\newblock {\em Convex Analysis and Nonlinear Optimization: Theory and
  Examples}.
\newblock Cms Books in Mathematics Series. Springer Verlag, 2000.

\bibitem{bousquet2008tradeoffs}
O.~Bousquet and L.~Bottou.
\newblock The tradeoffs of large scale learning.
\newblock In {\em NIPS}, pages 161--168, 2008.

\bibitem{boyd2004convex}
S.~Boyd and L.~Vandenberghe.
\newblock {\em Convex Optimization}.
\newblock Cambridge University Press, 2004.

\bibitem{bu2013superpixel}
S.~Bu, Z.~Liu, J.~Han, and J.~Wu.
\newblock Superpixel segmentation based structural scene recognition.
\newblock In {\em MM}, pages 681--684. ACM, 2013.

\bibitem{crammer2001algorithmic}
K.~Crammer and Y.~Singer.
\newblock On the algorithmic implementation of multiclass kernel-based vector
  machines.
\newblock {\em The Journal of Machine Learning Research}, 2:265--292, 2001.

\bibitem{doersch2013mid}
C.~Doersch, A.~Gupta, and A.~A. Efros.
\newblock Mid-level visual element discovery as discriminative mode seeking.
\newblock In {\em NIPS}, pages 494--502, 2013.

\bibitem{REF08a}
R.-E. Fan, K.-W. Chang, C.-J. Hsieh, X.-R. Wang, and C.-J. Lin.
\newblock {LIBLINEAR}: A library for large linear classification.
\newblock {\em Journal of Machine Learning Research}, 9:1871--1874, 2008.

\bibitem{gong2014multi}
Y.~Gong, L.~Wang, R.~Guo, and S.~Lazebnik.
\newblock Multi-scale orderless pooling of deep convolutional activation
  features.
\newblock In {\em ECCV}, 2014.

\bibitem{Gupta2014}
M.~R. Gupta, S.~Bengio, and J.~Weston.
\newblock Training highly multiclass classifiers.
\newblock {\em JMLR}, 15:1461--1492, 2014.

\bibitem{jia2014caffe}
Y.~Jia, E.~Shelhamer, J.~Donahue, S.~Karayev, J.~Long, R.~Girshick,
  S.~Guadarrama, and T.~Darrell.
\newblock Caffe: Convolutional architecture for fast feature embedding.
\newblock {\em arXiv preprint arXiv:1408.5093}, 2014.

\bibitem{joachims2005support}
T.~Joachims.
\newblock A support vector method for multivariate performance measures.
\newblock In {\em ICML}, pages 377--384, 2005.

\bibitem{juneja2013blocks}
M.~Juneja, A.~Vedaldi, C.~Jawahar, and A.~Zisserman.
\newblock Blocks that shout: distinctive parts for scene classification.
\newblock In {\em CVPR}, 2013.

\bibitem{kiwiel2008variable}
K.~Kiwiel.
\newblock Variable fixing algorithms for the continuous quadratic knapsack
  problem.
\newblock {\em Journal of Optimization Theory and Applications},
  136(3):445--458, 2008.

\bibitem{koskela2014convolutional}
M.~Koskela and J.~Laaksonen.
\newblock Convolutional network features for scene recognition.
\newblock In {\em Proceedings of the ACM International Conference on
  Multimedia}, pages 1169--1172. ACM, 2014.

\bibitem{lapin2014scalable}
M.~Lapin, B.~Schiele, and M.~Hein.
\newblock Scalable multitask representation learning for scene classification.
\newblock In {\em CVPR}, 2014.

\bibitem{li2014top}
N.~Li, R.~Jin, and Z.-H. Zhou.
\newblock Top rank optimization in linear time.
\newblock In {\em NIPS}, pages 1502--1510, 2014.

\bibitem{liu2009efficient}
J.~Liu and J.~Ye.
\newblock Efficient euclidean projections in linear time.
\newblock In {\em ICML}, pages 657--664, 2009.

\bibitem{ogryczak2003minimizing}
W.~Ogryczak and A.~Tamir.
\newblock Minimizing the sum of the $k$ largest functions in linear time.
\newblock {\em Information Processing Letters}, 85(3):117--122, 2003.

\bibitem{patriksson2008survey}
M.~Patriksson.
\newblock A survey on the continuous nonlinear resource allocation problem.
\newblock {\em European Journal of Operational Research}, 185(1):1--46, 2008.

\bibitem{patriksson2015algorithms}
M.~Patriksson and C.~Str\"{o}mberg.
\newblock Algorithms for the continuous nonlinear resource allocation problem
  -- new implementations and numerical studies.
\newblock {\em European Journal of Operational Research}, 243(3):703--722,
  2015.

\bibitem{petersen2008matrix}
K.~B. Petersen, M.~S. Pedersen, et~al.
\newblock The matrix cookbook.
\newblock {\em Technical University of Denmark}, 450:7--15, 2008.

\bibitem{quattoni2009recognizing}
A.~Quattoni and A.~Torralba.
\newblock Recognizing indoor scenes.
\newblock In {\em CVPR}, 2009.

\bibitem{razavian2014cnn}
A.~S. Razavian, H.~Azizpour, J.~Sullivan, and S.~Carlsson.
\newblock Cnn features off-the-shelf: an astounding baseline for recognition.
\newblock {\em arXiv preprint arXiv:1403.6382}, 2014.

\bibitem{ILSVRCarxiv14}
O.~Russakovsky, J.~Deng, H.~Su, J.~Krause, S.~Satheesh, S.~Ma, Z.~Huang,
  A.~Karpathy, A.~Khosla, M.~Bernstein, A.~C. Berg, and L.~Fei-Fei.
\newblock {ImageNet Large Scale Visual Recognition Challenge}, 2014.

\bibitem{sanchez2013image}
J.~S{\'a}nchez, F.~Perronnin, T.~Mensink, and J.~Verbeek.
\newblock Image classification with the {F}isher vector: theory and practice.
\newblock {\em IJCV}, pages 1--24, 2013.

\bibitem{shalev2014accelerated}
S.~Shalev-Shwartz and T.~Zhang.
\newblock Accelerated proximal stochastic dual coordinate ascent for
  regularized loss minimization.
\newblock {\em Mathematical Programming}, pages 1--41, 2014.

\bibitem{sun2013learning}
J.~Sun and J.~Ponce.
\newblock Learning discriminative part detectors for image classification and
  cosegmentation.
\newblock In {\em ICCV}, pages 3400--3407, 2013.

\bibitem{swersky2012probabilistic}
K.~Swersky, B.~J. Frey, D.~Tarlow, R.~S. Zemel, and R.~P. Adams.
\newblock Probabilistic $n$-choose-$k$ models for classification and ranking.
\newblock In {\em NIPS}, pages 3050--3058, 2012.

\bibitem{usunier2009ranking}
N.~Usunier, D.~Buffoni, and P.~Gallinari.
\newblock Ranking with ordered weighted pairwise classification.
\newblock In {\em ICML}, pages 1057--1064, 2009.

\bibitem{Weston2011}
J.~Weston, S.~Bengio, and N.~Usunier.
\newblock Wsabie: scaling up to large vocabulary image annotation.
\newblock {\em IJCAI}, pages 2764--2770, 2011.

\bibitem{xiao2010sun}
J.~Xiao, J.~Hays, K.~A. Ehinger, A.~Oliva, and A.~Torralba.
\newblock {SUN} database: Large-scale scene recognition from abbey to zoo.
\newblock In {\em CVPR}, 2010.

\bibitem{zhou2014learning}
B.~Zhou, A.~Lapedriza, J.~Xiao, A.~Torralba, and A.~Oliva.
\newblock Learning deep features for scene recognition using places database.
\newblock In {\em NIPS}, 2014.

\end{thebibliography}
}

\end{document}